\definecolor{mycyan}{HTML}{005397}
\definecolor{myred}{HTML}{E13333}
\definecolor{mymagenta}{HTML}{BF3E87}
\definecolor{mypurple}{HTML}{1B2278}
\definecolor{tearose}{HTML}{F584C5}
\definecolor{coral}{HTML}{F67088}
\definecolor{dodger_blue}{HTML}{3BA3EC}
\definecolor{domino}{HTML}{BC9F48}
\definecolor{domino}{HTML}{BC9F48}
\definecolor{catalina_blue}{HTML}{1C3168}
\definecolor{catalina_blue}{HTML}{1C3168}
\definecolor{dark_scarlet}{HTML}{C63D52}
\definecolor{cerulean}{HTML}{0192A8}
\definecolor{tussock}{HTML}{C99E31}
\definecolor{p13}{HTML}{BFB5D7}
\definecolor{b14}{HTML}{BEA1A5}
\definecolor{y15}{HTML}{F0Cf61}
\definecolor{Merino}{HTML}{F3EEE3}
\definecolor{Twilight}{HTML}{4E518B}
\definecolor{light_silver}{HTML}{D9D9D9}
\definecolor{prussian_blue}{HTML}{023047}
\definecolor{pacific_blue}{HTML}{219EBC}
\definecolor{cornflower}{HTML}{90C9E6}
\newcolumntype{a}{>{\columncolor{p13}}l}
\crefname{ineq}{Inequality}{Inequalities}
\theoremstyle{plain}
\newtheorem{theorem}{Theorem}[section]
\newtheorem{proposition}[theorem]{Proposition}
\newtheorem{lemma}[theorem]{Lemma}
\theoremstyle{definition}
\newtheorem{definition}[theorem]{Definition}
\theoremstyle{remark}
\newtheorem{remark}[theorem]{Remark}
\theoremstyle{remark}
\theoremstyle{remark}
\newcommand{\abs}[1]{\left\lvert#1\right\rvert}
\definecolor{mypink1}{RGB}{255, 216, 176}
\definecolor{myblack1}{RGB}{0, 0, 0}
\definecolor{myalmostblack}{RGB}{51, 51, 51}
\definecolor{mygreen1}{RGB}{179, 254, 174}
\definecolor{myBlue2}{RGB}{148, 195, 225}
\definecolor{myBlue3}{RGB}{174, 219, 137}
\definecolor{myred1}{RGB}{255, 179, 178}
\definecolor{myblue1}{RGB}{162, 177, 195}
\tikzstyle{arrow} = [thick, draw=myblue1, line width=2, ->, >=stealth]
\pgfplotsset{compat=1.18}
\pgfplotsset{
  every axis/.append style = {thick},
  tick style = {thick,black},
  %
  % #1 = x, y, or z
  % #2 = the shift value
  /tikz/normal shift/.code 2 args = {%
    \pgftransformshift{%
        \pgfpointscale{#2}{\pgfplotspointouternormalvectorofticklabelaxis{#1}}%
    }%
  },%
  shift/.style = {
    tick align        = outside,
    scaled ticks      = false,
    enlargelimits     = false,
    ticklabel shift   = {#1},
    axis lines*       = left,
    xtick style       = {normal shift={x}{#1}},
    ytick style       = {normal shift={y}{#1}},
    x axis line style = {normal shift={x}{#1}},
    y axis line style = {normal shift={y}{#1}},
  },
  shift/.default = 10pt,
  shift3d/.style = {
    shift=#1,
    ztick style       = {normal shift={z}{#1}},
    z axis line style = {normal shift={z}{#1}},
  },
  shift3d/.default = 10pt,
}
\newcolumntype{H}{>{\setbox0=\hbox\bgroup}c<{\egroup}@{}}
\begin{document}

%%
%% The "title" command has an optional parameter,
%% allowing the author to define a "short title" to be used in page headers.
\title{Correct-By-Construction: Certified Individual Fairness through Neural Network Training}

%%
%% The "author" command and its associated commands are used to define
%% the authors and their affiliations.
%% Of note is the shared affiliation of the first two authors, and the
%% "authornote" and "authornotemark" commands
%% used to denote shared contribution to the research.
\author{Ruihan Zhang}
% \authornote{Both authors contributed equally to this research.}
\email{rhzhang.2021@phdcs.smu.edu.sg}
\orcid{0009-0006-6669-7076}
% \author{Jun Sun}
% \authornotemark[1]
% \email{webmaster@marysville-ohio.com}
\affiliation{%
  \institution{Singapore Management University}
  \city{Singapore}
  % \state{Ohio}
  \country{Singapore}
}

\author{Jun Sun}
\affiliation{%
  \institution{Singapore Management University}
  \city{Singapore}
  \country{Singapore}}
\email{junsun@smu.edu.sg}
\orcid{0000-0002-3545-1392}

% \author{Valerie B\'eranger}
% \affiliation{%
%   \institution{Inria Paris-Rocquencourt}
%   \city{Rocquencourt}
%   \country{France}
% }

% \author{Aparna Patel}
% \affiliation{%
%  \institution{Rajiv Gandhi University}
%  \city{Doimukh}
%  \state{Arunachal Pradesh}
%  \country{India}}

% \author{Huifen Chan}
% \affiliation{%
%   \institution{Tsinghua University}
%   \city{Haidian Qu}
%   \state{Beijing Shi}
%   \country{China}}

% \author{Charles Palmer}
% \affiliation{%
%   \institution{Palmer Research Laboratories}
%   \city{San Antonio}
%   \state{Texas}
%   \country{USA}}
% \email{cpalmer@prl.com}

% \author{John Smith}
% \affiliation{%
%   \institution{The Th{\o}rv{\"a}ld Group}
%   \city{Hekla}
%   \country{Iceland}}
% \email{jsmith@affiliation.org}

% \author{Julius P. Kumquat}
% \affiliation{%
%   \institution{The Kumquat Consortium}
%   \city{New York}
%   \country{USA}}
% \email{jpkumquat@consortium.net}

%%
%% By default, the full list of authors will be used in the page
%% headers. Often, this list is too long, and will overlap
%% other information printed in the page headers. This command allows
%% the author to define a more concise list
%% of authors' names for this purpose.
\renewcommand{\shortauthors}{Zhang and Sun}

%%
%% The abstract is a short summary of the work to be presented in the
%% article.
\begin{abstract}
Fairness in machine learning is more important than ever as ethical concerns continue to grow. Individual fairness demands that individuals differing only in sensitive attributes receive the same outcomes. However, commonly used machine learning algorithms often fail to achieve such fairness. To improve individual fairness, various training methods have been developed, such as incorporating fairness constraints as optimisation objectives. While these methods have demonstrated empirical effectiveness, they lack formal guarantees of fairness. Existing approaches that aim to provide fairness guarantees primarily rely on verification techniques, which can sometimes fail to produce definitive results. Moreover, verification alone does not actively enhance individual fairness during training. To address this limitation, we propose a novel framework that formally guarantees individual fairness throughout training. Our approach consists of two parts, \emph{i.e.}, (1) provably fair initialisation that ensures the model starts in a fair state, and (2) a fairness-preserving training algorithm that maintains fairness as the model learns. A key element of our method is the use of randomised response mechanisms, which protect sensitive attributes while maintaining fairness guarantees. We formally prove that this mechanism sustains individual fairness throughout the training process. Experimental evaluations confirm that our approach is effective, \emph{i.e.}, producing models that are empirically fair and accurate. Furthermore, our approach is much more efficient than the alternative approach based on certified training (which requires neural network verification during training). 
\end{abstract}

%%
%% The code below is generated by the tool at http://dl.acm.org/ccs.cfm.
%% Please copy and paste the code instead of the example below.
%%
\begin{CCSXML}
<ccs2012>
   <concept>
       <concept_id>10010147.10010257.10010293.10010294</concept_id>
       <concept_desc>Computing methodologies~Neural networks</concept_desc>
       <concept_significance>300</concept_significance>
       </concept>
   <concept>
       <concept_id>10011007.10010940.10010992.10010998</concept_id>
       <concept_desc>Software and its engineering~Formal methods</concept_desc>
       <concept_significance>300</concept_significance>
       </concept>
   <concept>
       <concept_id>10003456.10010927.10003611</concept_id>
       <concept_desc>Social and professional topics~Race and ethnicity</concept_desc>
       <concept_significance>300</concept_significance>
       </concept>
   <concept>
       <concept_id>10003456.10010927.10003613</concept_id>
       <concept_desc>Social and professional topics~Gender</concept_desc>
       <concept_significance>300</concept_significance>
       </concept>
 </ccs2012>
\end{CCSXML}

\ccsdesc[300]{Computing methodologies~Neural networks}
\ccsdesc[300]{Software and its engineering~Formal methods}
\ccsdesc[300]{Social and professional topics~Race and ethnicity}
\ccsdesc[300]{Social and professional topics~Gender}

%%
%% Keywords. The author(s) should pick words that accurately describe
%% the work being presented. Separate the keywords with commas.
\keywords{Individual fairness, Certified fair training, Randomised response}

\received{2025-03-26}
\received[accepted]{2025-08-12}

%%
%% This command processes the author and affiliation and title
%% information and builds the first part of the formatted document.
\maketitle

\section{Introduction}
Fairness in machine learning has become a vital concern as machine learning models increasingly influence decisions in critical domains such as hiring, credit scoring, and law enforcement~\cite{dwork2012fairness,caton2024fairness,locatello2019fairness,mhasawade2021machine}. These applications directly affect individuals' lives, making it crucial to ensure that these systems function without bias or discrimination.

Nevertheless, many existing machine-learning models in use do not meet this demand~\cite{barocas2023fairness}. Conventional training prioritises accuracy as an objective, inadvertently leading to models that optimise performance at the cost of fairness~\cite{kusner2017counterfactual}. Another major cause of unfair classification outcomes is the historical discrimination embedded in datasets. The existing unfairness may further aggravate during model training and eventually get amplified~\cite{brown2023detecting}. Hence, improving fairness within machine learning problems is an urgent and beneficial~\cite{bender2021dangers} mission.

Two prominent categories of fairness are individual fairness~\cite{zhang2020white,dwork2012fairness} and group fairness~\cite{jovanovic2023fare}. This work focuses on individual fairness, requiring all similarly situated individuals to receive similar treatment regardless of sensitive attributes such as race, gender, or age~\cite{dwork2012fairness}. Individual fairness is arguably one of the simplest and most intuitive definitions of fairness, as it guarantees fair outcomes on a personal level.

Enforcing individual fairness is highly challenging. One difficulty is that determining if this individual fairness holds requires traversing the entire input space. Since individual fairness holds only if every individual is fairly treated, the entire input space needs to be explicitly visited, which is difficult~\cite{kim2025fairquant}. An additional difficulty arises from the complex correlation between sensitive features and non-sensitive features~\cite{zemel2013learning}, which makes it hard to disentangle sensitive features from non-sensitive ones.

To improve individual fairness in machine learning, existing approaches can be broadly categorised into fairness verification and fairness-aware training. Verification-based methods, such as those proposed by Biswas et al.~\cite{biswas2023fairify} and Kim et al.~\cite{kim2025fairquant}, aim to detect fairness violations after a model has been trained by systematically analysing its behaviour across the input space. While these methods provide a structured framework for identifying unfair outcomes, they are computationally expensive and struggle to scale to complex neural networks, thereby limiting their practicality in real-world applications~\cite{biswas2023fairify}. More importantly, these methods do not improve the fairness of the trained model itself. 

On the other hand, fairness-aware training actively optimises individual fairness during training. Such training often represents individual fairness as an additional optimisation objective. These methods are typically considered to directly mitigate bias during model development and thus enhance fairness. However, such enhancement in individual fairness hardly comes with a formal guarantee~\cite{barocas2023fairness}. Biases can be observed to decrease in specific scenarios, but a provable extrapolation across all possible inputs is a separate issue~\cite{li2023accurate}. To the best of our knowledge, the work by Ruoss et al.~\cite{ruoss2020learning} remains the only approach that aims to provide individual fairness guarantees during training. Their method utilises an autoencoder to generate a fair feature representation, followed by certified training to ensure robustness. However, this approach may still fail to account for certain inputs~\cite{kim2025fairquant} and is constrained by high computational complexity and neural architecture specificity. Therefore, achieving a sound guarantee for individual fairness (certified individual fairness) remains an open challenge.

In this work, we tackle this problem through a correct-by-construction approach, \emph{i.e.}, by designing a training method that formally guarantees individual fairness across all inputs. The training has two parts: (1) provably fair initialisation and (2) fairness-preserving learning. Specifically, the individual fairness of the initial model is verified by fairness verifiers like Fairfy~\cite{biswas2023fairify}. We show that while verifying the individual fairness of a highly accurate model can be challenging~\cite{biswas2023fairify}, it is relatively straightforward for models where accuracy is not yet a concern. To ensure fairness throughout the training process, our fairness-preserving update mechanism leverages randomised response~\cite{warner1965randomized} techniques to safeguard sensitive attributes. This update mechanism is formally proven for correctness through partial derivatives. By structuring the training process to inherently maintain fairness, our approach guarantees individual fairness in the final model. Unlike existing fairness-aware training methods~\cite{li2023certifying}, where individual fairness is gradually improved, our method integrates fairness as an intrinsic property of the neural network from the start. As a result, post hoc verification for individual fairness in the trained model is no longer required.

Besides the theoretical individual fairness guarantee, we empirically evaluate the utility and efficiency of the proposed approach on multiple widely used benchmark datasets, and compare the results with those of multiple baseline methods. The experimental results confirm that our framework achieves high accuracy and computational efficiency with deep neural networks, demonstrating its potential as a scalable and reliable solution for fair machine learning.

Our main contributions are as follows. First, we theoretically establish that fairness can be guaranteed at the initialisation stage and preserved throughout the training process using carefully constructed parameter updates. Second, we propose a novel scalable training algorithm that integrates randomised response mechanisms to protect sensitive attributes while ensuring fairness propagation, and show that it can be effectively and efficiently applied in practice.

The remainder of this paper is structured as follows: \cref{sec:pre} introduces the preliminaries, providing the foundational concepts for our approach. \cref{sec:method} details our proposed certified training method. In \cref{sec:exp}, we describe the experimental setup and present the results. \cref{sec:related} reviews related work, and finally, \cref{sec:conclusion} concludes.

\section{Preliminaries and Problem Definition}
\label{sec:pre}

In this section, we provide an overview of the relevant background and formally define the research problem. In the background, we first introduce the notation of neural networks. Then, we show the formal definition of individual fairness accordingly. Meanwhile, we discuss individual fairness verification and certified training.

\subsection{Neural Networks}
\label{sec:nn}

A neural network can be formally interpreted as a function $f\!\mid_{\boldsymbol{\theta}}: \mathbb{X}\times\mathbb{S}\to \mathbb{Y}$, where $f$ defines the structure of the network and $\boldsymbol{\theta}$ represents the parameters of the network. The input domain $\mathbb{X}\times\mathbb{S}$ is Cartesian product of non-sensitive features (any $\bm{x}\in\mathbb{X}$) and sensitive features (any $\bm{s}\in\mathbb{S}$). While an input (including sensitive features) is often denoted simply as $\bm{x}$, we separate it into $\bm{x}$ and $\bm{s}$ for clarity and to facilitate the study of fairness. The output space is denoted as $\mathbb{Y}$, usually $\{1, 2, \ldots, K\}$ in the context of a $K$-category classification task.

For clarity and simplicity, we consistently use $f, \boldsymbol{\theta}, \mathbb{X}, \mathbb{S}, \mathbb{Y}$, and $K$ throughout this manuscript for representing the above-mentioned quantities. For specific instances of variables, such as a particular neural network structure ($f_0$) or a sensitive feature ($\bm{s}_0\in\mathbb{S}$), we include a subscript without implying sequence, order, or indexing. When referencing values from container-like structures, such as vectors, we use parentheses for indexing. For example, $\boldsymbol{\theta}_{(i)}$ represents the parameters on the $i$-th layer in a multi-layered network. Further, $f\!\mid_{\boldsymbol{\theta}_0}$ represents the restriction of $f$ to $\boldsymbol{\theta} = \boldsymbol{\theta}_0$, \emph{i.e.}, $f\!\mid_{\boldsymbol{\theta} = \boldsymbol{\theta}_0}(\bm{x}, \bm{s}) = f(\bm{x}, \bm{s}, \boldsymbol{\theta}_0)$.

\subsection{Individual Fairness}

Individual fairness mandates that similar individuals receive similar treatment. In the context of individual fairness, sensitive features (\emph{e.g.}, gender or race) should not influence the predictions made by neural networks. Formally, it is defined as follows.

\begin{definition}[Individual Fairness for a Sample~\cite{zhang2020white}]
\label{def:local}
Given a neural network $f\!\mid_{\boldsymbol{\theta}}$ and an input sample $(\bm{x}_1, \bm{s}_1) \in \mathbb{X}\times\mathbb{S}$, we say that $f\!\mid_{\boldsymbol{\theta}}$ is individually fair w.r.t $(\bm{x}_1, \bm{s}_1)$ if and only if $f\!\mid_{\boldsymbol{\theta}}(\bm{x}_1, \bm{s}_1) = f\!\mid_{\boldsymbol{\theta}}(\bm{x}_1, \bm{s}_2)$ for any $\bm{s}_2 \neq \bm{s}_1$. 
\end{definition}

While the above definition focuses on one sample, we can extend the definition and define individual fairness regarding multiple or all inputs in a certain domain.

\begin{definition}[Individual Fairness for a Domain\label{def:if}~\cite{biswas2023fairify,kim2025fairquant}]
Given a neural network $f\!\mid_{\boldsymbol{\theta}}$ and an input domain $\mathbb{X}\times\mathbb{S}$, we say that $f\!\mid_{\boldsymbol{\theta}}$ is individually fair for $\mathbb{X}\times\mathbb{S}$ if and only if the following condition is satisfied.
\begin{equation}
\label{eq:qualitative}
    \forall \bm{x} \in \mathbb{X}. \forall \bm{s}_1, \bm{s}_2\in\mathbb{S}. ~f\!\mid_{\boldsymbol{\theta}}(\bm{x}, \bm{s}_1) = f\!\mid_{\boldsymbol{\theta}}(\bm{x}, \bm{s}_2) 
\end{equation}
\end{definition}

\subsubsection{Individual fairness verification}
Getting to know if a neural network $f\!\mid_{\boldsymbol{\theta}}$ is individually fair over an input domain is highly non-trivial. The input space is often large or even infinite, preventing naive approaches such as explicitly enumerating all possible inputs. Hence, the verification of individual fairness for the input domain is typically addressed based on certain abstractions, such as symbolic interval analysis~\cite{wang2018formal,kim2025fairquant} and SMT solving~\cite{katz2019marabou,biswas2023fairify}.

\begin{figure}[t]
    \centering
    \tikzset{
    every neuron/.style={
        circle,
        fill=domino!10,
        draw,
        minimum size=30pt
      }
    }
    
    \tikzset{
    sensitive/.style={
        circle,
        draw,
        fill=red!20,
        dashed,
        minimum size=30pt
      }
    }

    \begin{subfigure}[t]{0.45\linewidth}
    \scriptsize
    \begin{tikzpicture}
        \pgfplotsset{
        width=0.3\textwidth,
        height=0.2\textwidth
        }

        \foreach \m [count=\y] in {1,2,3,4}
          \node [every neuron/.try, neuron \m/.try ] (input-\m) at (0,-1.2*\y) {$z_{i-1, \m}$};

        \node[sensitive, fill=blue!10] (i2) at (0, -2.4) {$z_{i-1, 2}$};
          
        \foreach \m [count=\y] in {1,2,3}
          \node [every neuron/.try, neuron \m/.try, fill=gray!5 ] (hidden-\m) at (2,-0.6-1.2*\y) {$z_{i, \m}$};

        \foreach \m [count=\y] in {1}
          \node [every neuron/.try, neuron \m/.try, fill=cerulean!4] (output-\m) at (4,-1.2-1.2*\y) {$z_{i+1, 1}$};
          \foreach \i in {1,...,4}
          \foreach \j in {1,...,3}
            \draw [->] (input-\i) -- (hidden-\j);
        
        \foreach \i in {1,...,3}
          \foreach \j in {1,...,1}
            \draw [->] (hidden-\i) -- (output-\j);
        
        \foreach \l [count=\x from 0] in {\textsc{Input}, \textsc{Hidden}, \textsc{Ouput}}
          \node [align=center, above] at (\x*2,-0.6) {\l };
    
    \end{tikzpicture}
    \end{subfigure}
    \quad\vline\quad
    \begin{subfigure}[t]{0.45\linewidth}
    \scriptsize
    \begin{tikzpicture}
        \pgfplotsset{
        width=0.3\textwidth,
        height=0.2\textwidth
        }

        \foreach \m [count=\y] in {1,2,3,4}
          \node [every neuron/.try, neuron \m/.try ] (input-\m) at (0,-1.2*\y) {$z_{i-1, \m}$};

        \node[sensitive] (i2) at (0, -2.4) {$z_{i-1, 2}$};
          
        \foreach \m [count=\y] in {1,2,3}
          \node [every neuron/.try, neuron \m/.try, fill=gray!5 ] (hidden-\m) at (2,-0.6-1.2*\y) {$z_{i, \m}$};

        \foreach \m [count=\y] in {1}
          \node [every neuron/.try, neuron \m/.try, fill=cerulean!4] (output-\m) at (4,-1.2-1.2*\y) {$z_{i+1, 1}$};
        
        \foreach \i in {1,...,4}
          \foreach \j in {1,...,3}
            \draw [->] (input-\i) -- (hidden-\j);
        
        \foreach \i in {1,...,3}
          \foreach \j in {1,...,1}
            \draw [->] (hidden-\i) -- (output-\j);
        
        \foreach \l [count=\x from 0] in {\textsc{Input}, \textsc{Hidden}, \textsc{Ouput}}
          \node [align=center, above] at (\x*2,-0.6) {\l };
    
    \end{tikzpicture}
    \end{subfigure}

\caption{Individual fairness of an example neural classifier. The left and right figures depict the same classifier (\emph{i.e.}, same $f\!\mid_{\boldsymbol{\theta}}$) taking two inputs, where only non-sensitive features are the same. When studying a neuron in the network, we may denote its current layer index as $i$ (regardless of its absolute position), and thus its preceding neurons as $i-1$ and its subsequent neuron as $i+1$.}
\label{fig:network}
\end{figure}
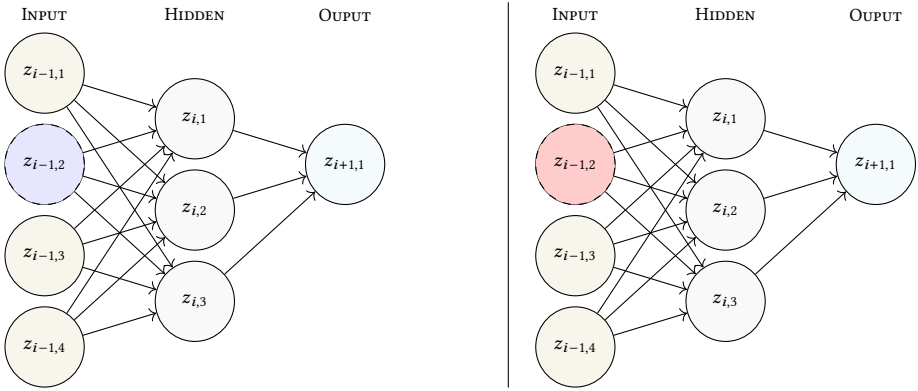

The individual fairness verification problem can be formulated as a general reachability problem that given the precondition on the input $(\bm{x}, \bm{s})$ and a classifier instance $f_0\!\mid_{\boldsymbol{\theta}_0}$, the output $f_0(\bm{x}, \bm{s}, \boldsymbol{\theta}_0)$ satisfies some postcondition~\cite{biswas2023fairify}. The precondition is typically established according to domain knowledge. Each (sensitive or non-sensitive) input feature is known to come from a certain range. For instance, if there is an income prediction task and a \textit{work-hours} (per week) feature, the precondition can be \textit{work-hours} $\in[1,100]$. Thereby, the verification problem is to find whether there is a sample violating the postcondition (\emph{i.e.}, the model generates different predictions when only some sensitive feature differs) within this input range. In recent studies~\cite{urban2020perfectly,zhang2020white,udeshi2018automated,albarghouthi2017fairsquare,bastani2019probabilistic,galhotra2017fairness}, verifying the satisfiability of preconditions and postconditions often involves constraint solving. That is, the verification problem is formulated as a constraint-solving problem and submitted to an SMT solver for satisfiability checking~\cite{biswas2023fairify}.

The verification process typically involves an input space abstraction method. Many abstraction methods have been proposed~\cite{pulina2010abstraction,elboher2020abstraction,kim2025fairquant}. The verification result can be provably fair (\emph{verified}), provably unfair (\emph{falsified}), or \emph{undecided} (\emph{e.g.}, due to timeout). 

\begin{remark}
Suppose $F(f\!\mid_{\boldsymbol{\theta}}, \mathbb{X}, \mathbb{S})=1$ denotes that neural network $f\!\mid_{\boldsymbol{\theta}}$ fulfils \cref{def:if}, \emph{i.e.}, $f\!\mid_{\boldsymbol{\theta}}$ is individually fair. Let $M$ be a sound verification process and $M(f\!\mid_{\boldsymbol{\theta}}, \mathbb{X}, \mathbb{S})=1$ means that the neural network can be verified. Then, we have $(M(f\!\mid_{\boldsymbol{\theta}}, \mathbb{X}, \mathbb{S})=1) \to (F(f\!\mid_{\boldsymbol{\theta}}, \mathbb{X}, \mathbb{S}) = 1)$, which means that $f\!\mid_{\boldsymbol{\theta}}$ is provably fair if it can be verified. This would be a sufficient condition but not a necessary condition.
\end{remark}

An intriguing observation is that many classifiers tend to be falsified/undecided in the verification process. As reported by Biswas et al. \cite{biswas2023fairify}, altogether 25 classifiers on three classification tasks have been collected. From verification, 19 of them are falsified, 6 are undecided, and none is verified. Furthermore, while verification does provide formal guarantees, it does not solve the individual fairness satisfiability problem because it does not improve the fairness of the model itself.

\subsubsection{Local or global individual fairness}
\begin{figure}[t]

    \centering
    \begin{subfigure}[t]{0.45\linewidth}
    \begin{tikzpicture}
    \begin{axis}[
        width=\linewidth, height=0.75\linewidth,
        xmin=-2.4, xmax=3.3,
        ymin=-1.5, ymax=2.3,
        axis lines=box,              % keep full box
        % axis line style={thick},        % set the outer box to a thinner line
        tick align=outside,
        xtick pos=bottom,            % ticks only at bottom
        ytick pos=left,              % ticks only at left
        tick style={black, thin},   % style of ticks
        major tick length=2pt,       % make ticks shorter
        xtick distance=1,
        enlargelimits=false,
        clip=false,
        legend columns=2,
        legend style={
            at={(1.2,1.2)}, anchor=north,
            draw=none, rounded corners, fill=none,
            cells={anchor=west},font=\scriptsize
        }
    ]

    \addplot+[only marks, draw=blue!60!black, mark=*, mark size=0.5pt] table {points.txt};
    \addlegendentry{Provided Input$\quad$}
    
    % red hollow squares (Certified Domain)
    \addplot+[only marks, mark=square, mark size=2.2pt,
              mark options={draw=red!85!black, line width=1.1pt, fill=none}]
              table {points.txt};
    \addlegendentry{Certified Domain}
    
    \end{axis}
    \end{tikzpicture}
    \caption{Local individual fairness\label{fig:localif}}
    
    \end{subfigure}
    \begin{subfigure}[t]{0.45\linewidth}
    \begin{tikzpicture}
    \begin{axis}[
        width=\linewidth,
        height=0.75\linewidth,
        xmin=-2.4, xmax=3.3,
        ymin=-1.5, ymax=2.3,
        axis lines=box,              % keep full box
        tick align=outside,
        xtick pos=bottom,            % ticks only at bottom
        ytick pos=left,              % ticks only at left
        tick style={black, },   % style of ticks
        major tick length=2pt,       % make ticks shorter
        xtick distance=1,
        enlargelimits=false,
        clip=false,
    ]
    
    \addplot[only marks, draw=blue!60!black, mark=*, mark size=0.5pt]
    table{points.txt};

    \draw[thick, red!85!black]
      (axis cs:-2.1,-1.35) rectangle (axis cs:3.1,2.15);
    
    \end{axis}
    \end{tikzpicture}
    \caption{Global individual fairness\label{fig:globalif}}
    \end{subfigure}
    \caption{Local (left) or global (right) individual fairness. Global individual fairness always covers the entire input space. Local individual fairness often covers the regions around given inputs. The left figure is one of many types of local individual fairness, depending on the subset choice. }
    \label{fig:localglobal}
\end{figure}
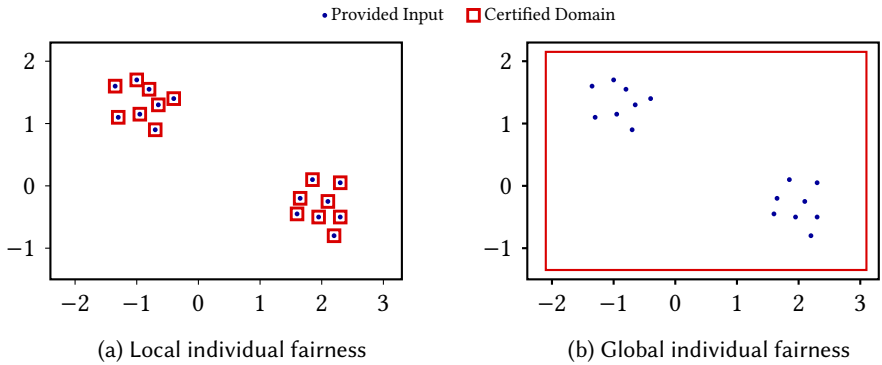

The fairness described in \cref{def:local} can be seen as a type of local individual fairness, which corresponds to the neural network property for a single input. There may also be other variations of local individual fairness, such as fairness within a subset of the input space surrounding this input~\cite{ruoss2020learning}. In contrast, the fairness in \cref{def:if} can be seen as global individual fairness, where the entire input space needs to be considered. Throughout this manuscript, fairness refers to \emph{global} individual fairness unless stated otherwise. \cref{fig:localglobal} illustrates local and global individual fairness. Theoretically, if a neural network satisfies global individual fairness, it also satisfies local individual fairness, but the reverse does not necessarily hold. Moreover, verifying global individual fairness is more challenging than verifying local individual fairness because it requires checking the entire space~\cite{kim2025fairquant}.

\subsubsection{Certified training}

Certified training aims to tackle the challenge that training can hardly be guaranteed, and verification alone does not actively make improvements. This type of approach has been extensively studied for neural network perturbation robustness~\cite{zhang2024certified}. Robustness verification checks whether a classifier $f\!\mid_{\boldsymbol{\theta}}$ maintains consistent predictions under small perturbations. Specifically, given an input $\bm{x}\in\mathbb{X}$ and a small constant $\epsilon$, $f\!\mid_{\boldsymbol{\theta}}$ is considered robust against \( \epsilon \)-perturbations if and only if $f\!\mid_{\boldsymbol{\theta}}(\bm{x}) = f\!\mid_{\boldsymbol{\theta}}(\bm{x}')$ for all $\bm{x}' \in \mathbb{X}$ such that $\|\bm{x} - \bm{x}'\| \leq \epsilon$. Robustness verification methods can be used to optimise classifiers during training~\cite{li2023sok,mu2025get,zhang2023towards}, a process known as certified training (of robustness), which provides formal guarantees of improved robustness. 

Certified training is also explored for individual fairness guarantees. A certain level of individual fairness guarantee is obtained from the robustness verifiers~\cite{ruoss2020learning}. However, robustness verifiers only provide guarantees within a small $\epsilon$ perturbation around the non-sensitive features. This often leads to a local guarantee of individual fairness because forcing $\epsilon$ to be a large value rarely works in practice~\cite{biswas2023fairify,kim2025fairquant}.

\subsection{Problem Definition} 

We are looking for a solution to the individual fairness satisfiability problem. This requires a model to be trained with adequate utility and individual fairness. Most importantly, a formal guarantee of individual fairness should be provided.

This raises the following questions. First, is it possible to find training that enhances individual fairness and also provides a formal guarantee? If so, what training approach accomplishes this, and how does it establish the guarantee? With these questions in mind, we now formally define our problem as in \cref{def:problem}.

\begin{definition}[Neural Network Certifiable Fair Training Problem]
\label{def:problem}
We are given a neural network structure $f$, a set of training examples $\set{(\bm{x}_i, \bm{s}_i, y_i)}_{i=1}^n$ where $n$ is the sample size. For $i = 1, \ldots, n$, we have $ (\bm{x}_i, \bm{s}_i, y_i)\in \mathbb{X}\times\mathbb{S}\times\mathbb{Y}$. The certifiable fair training problem is to construct a neural network $f\!\mid_{\boldsymbol{\theta^*}}$ such that $f\!\mid_{\boldsymbol{\theta^*}}$ satisfies $F(f\!\mid_{\boldsymbol{\theta^*}}, \mathbb{X}, \mathbb{S})=1$.
\end{definition}

\section{Method}
\label{sec:method}

In the following, we first show that individual fairness can be established as an inherent property of neural networks through two key parts. (1) Fairness can be guaranteed at initialisation by constructing a provably fair initialisation for the network. (2) We identify structural properties that propagate fairness from specific sub-networks to the entire model, enabling fairness to be maintained as the network updates. After that, we introduce fairness-preserving training algorithms that leverage the presented properties to protect sensitive attributes while maintaining scalability and accuracy. A schematic overview of the proposed method is shown in \cref{fig:method}.

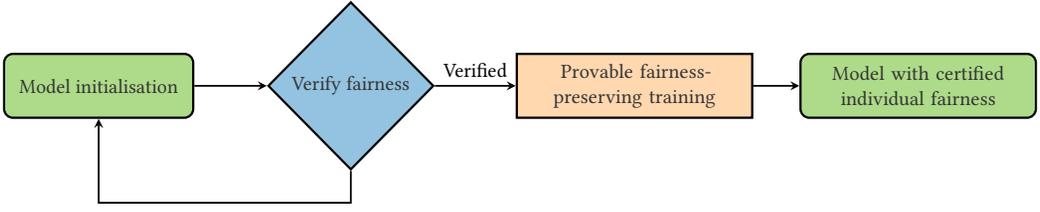
\begin{figure}[t]
    \centering
    \resizebox{\textwidth}{!}{ 
    \begin{tikzpicture}[node distance=1cm]
    
        % Nodes (Keeping original styling)
        \node (init) [rectangle, rounded corners, minimum width=3cm, minimum height=1cm, text centered, font=\small, color=myalmostblack, draw=myblack1, line width=1, fill=myBlue3, xshift=-2cm] {Model initialisation};
        \node (verify) [diamond, minimum width=2cm, minimum height=1cm, text centered, font=\small, color=myalmostblack, draw=myblack1, line width=1, fill=myBlue2, right of=init, xshift=3cm] {Verify fairness};
        \node (train) [rectangle, minimum width=2cm, minimum height=1cm, text centered, font=\small, color=myalmostblack, draw=myblack1, line width=1, fill=mypink1, text width=3.5cm, align=center, right of=verify, xshift=3.5cm] {Provable fairness- preserving training};
        \node (certified) [rectangle, rounded corners, minimum width=3cm, minimum height=1cm, text centered, font=\small, color=myalmostblack, draw=myblack1, line width=1, fill=myBlue3, right of=train, xshift=3.5cm, text width=3.5cm, align=center] {Model with certified individual fairness};
    
        % Arrows (Only modifying arrows for better aesthetics)
        \draw [thick, ->, >=stealth] (init) -- (verify);
        \draw [thick, ->, >=stealth] (verify) -- node[anchor=south, font=\small, align=center] {Verified} (train);
        \draw [thick, ->, >=stealth] (train) -- (certified);
    
        % Loopback Arrow (Smoother curve)
        % \draw [thick, ->, >=stealth, bend left=10] (verify.south) to (init.south);
        \draw [thick, ->, >=stealth, ] (verify.south) |- ++(0,-0.5) -| (init.south);

    \end{tikzpicture}
    }
    \caption{Our approach actively ensures fairness from the start, reliably achieving certified individual fairness.\label{fig:method}}    
\end{figure}

\subsection{Fairness Can Be Guaranteed at Neural Network Initialisation}

Fairness could be guaranteed from the start, instead of an afterthought when the neural network is trained. \cref{lemma:init} establishes the existence of an initialisation where the network is provably fair. By choosing parameter values that offer equality across sensitive attributes, the network ensures that the output remains identical regardless of the sensitive feature values for any input. For example, initialising all weights and other parameters to zeros guarantees fairness trivially as the model produces the same output for all inputs. Besides, random initialisations that satisfy $f\!\mid_{\boldsymbol{\theta}_{0}}(\bm{x},\bm{s})=0$ also exist and can be used as fair initialisations. Moreover, provably fair (but possibly not very accurate, if any) classifiers from the literature~\cite{biswas2023fairify,kim2025fairquant} could also be adopted to initialise the neural network.

\begin{lemma}
\label{lemma:init}
    There exists some parameter initialisation $\boldsymbol{\theta}_0$ for a neural network $f\!\mid_{\boldsymbol{\theta}} : \mathbb{X}\times \mathbb{S} \to \mathbb{Y}$ such that, for all $\bm{s}_1,\bm{s}_2 \in \mathbb{S}$ and for all $\bm{x}\in \mathbb{X}$,
    \begin{equation}
        f\!\mid_{\boldsymbol{\theta}_0}(\bm{x},\bm{s}_1)\;=\;    f\!\mid_{\boldsymbol{\theta}_0}(\bm{x},\bm{s}_2).        
    \end{equation}
    That is, $f\!\mid_{\boldsymbol{\theta}_0}$ is individually fair.
\end{lemma}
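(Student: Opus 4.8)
The statement is purely existential, so the plan is to exhibit a single concrete initialisation $\boldsymbol{\theta}_0$ as a witness and then verify that it satisfies the fairness condition of \cref{eq:qualitative} for this $\boldsymbol{\theta}_0$. I would take the cleanest such witness: set every weight on the edges leaving the sensitive input neurons to zero (the all-zero initialisation of the entire network is a special case and also works). The claim then reduces to showing that, under this choice, the value computed by the forward pass is a function of $\bm{x}$ alone and is completely insensitive to $\bm{s}$.

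First I would fix notation for the forward computation, writing the pre-activation and activation of each neuron in layer $i$ as a function of the previous layer's activations together with the relevant slice of $\boldsymbol{\theta}_0$. The essential structural fact I rely on is that in a standard feedforward network the sensitive features $\bm{s}$ enter only at the input layer; they influence the output solely through the weights on their outgoing edges into the first hidden layer.

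The key step is a short induction on the layer index $i$. For the base case, because every weight carrying $\bm{s}$ into the first hidden layer is zero, each first-layer pre-activation depends only on $\bm{x}$ (and the biases), so the first-layer activations are independent of $\bm{s}$. For the inductive step, if all activations at layer $i-1$ are independent of $\bm{s}$, then each pre-activation at layer $i$ — being an affine combination of layer-$(i-1)$ activations — is likewise independent of $\bm{s}$, and applying the fixed activation function preserves this independence. Propagating through to the output layer yields $f\!\mid_{\boldsymbol{\theta}_0}(\bm{x},\bm{s}_1)=f\!\mid_{\boldsymbol{\theta}_0}(\bm{x},\bm{s}_2)$ for all $\bm{s}_1,\bm{s}_2\in\mathbb{S}$ and all $\bm{x}\in\mathbb{X}$, which is exactly individual fairness.

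There is no genuine obstacle here: the real difficulty of fairness lies in \emph{preserving} it while also attaining accuracy, not in establishing it once. The only point requiring care is the modelling assumption that $\bm{s}$ appears exclusively at the input layer, so that zeroing the sensitive-edge weights severs every path from $\bm{s}$ to the output; I would state this explicitly. If one instead uses the all-zero initialisation, the induction collapses entirely, since every pre-activation is zero and the output is a constant function of the input, hence trivially independent of $\bm{s}$.
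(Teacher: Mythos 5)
Your proposal is correct, and it subsumes the paper's own argument as the special case you mention at the end: the paper simply sets every weight to zero and every bias to a constant $c$, so that $f\!\mid_{\boldsymbol{\theta}_0}(\bm{x},\bm{s}) = c$ for all inputs and fairness holds vacuously, with no induction needed. Your primary witness --- zeroing only the weights on edges leaving the sensitive input neurons and propagating $\bm{s}$-independence layer by layer --- is a genuinely stronger construction: it yields an initialisation that is individually fair while still being a non-constant function of $\bm{x}$, which is closer to something one could actually train from (the paper itself concedes right after the lemma that the constant-output initialisation ``can hardly update properly'' and must fall back on verified near-zero random initialisations). The price you pay is the extra modelling assumption you correctly flag, namely that $\bm{s}$ enters only at the input layer so that zeroing its outgoing edges severs every path from $\bm{s}$ to the output; this holds for the feedforward architectures in the paper but is not stated in the lemma itself, whereas the paper's constant-output witness needs no such assumption and works for any architecture admitting a constant parameterisation. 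For a pure existence claim both are adequate; yours does more work to deliver a more useful witness.
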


\begin{proof}
    To prove the existence of a provably fair classifier, it is sufficient to find at least one example that satisfies the condition. We can trivially do so by choosing all network weights and biases to produce a constant output $c$.  For instance, set every weight to $0$ and every bias to some constant $c$.  Then $\forall (\bm{x},\bm{s}),~f\!\mid_{\boldsymbol{\theta}_0}(\bm{x}, \bm{s})   = c$. Trivially, for every $\bm{s}_1,\bm{s}_2$, $f\!\mid_{\boldsymbol{\theta}_0}(\bm{x}, \bm{s}_1) = c = f\!\mid_{\boldsymbol{\theta}_0}(\bm{x}, \bm{s}_2)$, so we have perfect individual fairness.
    % Since at least one such initialisation $\boldsymbol{\theta}_0$ clearly exists (we have explicitly constructed it), there must exist a fair initialisation. 
    This completes the proof.   
\end{proof}

This result is significant for two reasons. First, it shifts fairness from being externally injected to a static internal property. Second, it provides a critical starting point for guaranteeing fairness during subsequent updates.

In the proof of \cref{lemma:init}, we use the trivial zero initialisation of parameters to show the existence of fair initialisation. Yet, such neural networks can hardly update properly. To this end, one solution is to adopt near-zero random initialisations. Their fairness is not guaranteed in general, but we can prove it using formal fairness verifiers~\cite{biswas2023fairify}. For instance, we can let each parameter take a random value from a Bernoulli trial with success probability $p$ (\emph{e.g.}, 0.5) and outcomes $-e^\phi$ and $+e^\phi$, where $\phi$ is some small power like $-10$. Then, as long as an initialisation instance is verified to be fair, it serves as proof of \cref{lemma:init} as well.

We note that such initialisations, while fair, often lack practical utility. This raises the next question, \emph{i.e.}, although we can start with fairness, can we keep it while improving the network utility? This question motivates the next steps, which focus on propagating fairness through the network structure and preserving it during training. \cref{lemma:init} thus serves as the foundation upon which a fair and functioning neural network can be built.

\subsection{Preserving Individual Fairness in Neural Networks}

We show that all neural networks follow two properties of individual fairness. First, if every neuron in the second-last layer (\emph{i.e.}, all neurons that input to the output neuron of this network) is individually fair, the entire neural network is provably individually fair (\cref{lemma:sub}). Second, if the parameters of an individually fair neural network are updated in a way such that a certain condition is satisfied, the updated network remains individually fair (\cref{lemma:either}).

To establish these properties, we extend the neural network formulation from \cref{sec:nn} to include the internal structure. A neural network can be formulated as a graph, where each node corresponds to a neuron. An example neural network is shown in \cref{fig:network}. Let $z$ represent a neuron at a layer such that $z_{n,1}$ (where $n$ denotes the last layer index of neural network) represents the output of this entire neural network, \emph{i.e.}, $z_{n,1} = f(\bm{x}, \bm{s}, \boldsymbol{\theta})$. We use fully connected networks in \cref{fig:network} to demonstrate this property, yet this notation indeed holds for all networks that can be represented as a directed acyclic graph, because we simply need the neurons at each layer to be written as local functions of neurons at their previous layer, \emph{i.e.},
\begin{equation}
\label{eq:decompose}
\forall z_{i,j}\in \set{z_{i,1}, z_{i,2}, \ldots}.\quad z_{i,j} = f_{(i,j)}\left(z_{i-1,1}, z_{i-1,2}, \ldots, \boldsymbol{\theta}_{(i,j)}\right),   
\end{equation}
where the overall function $f$ can be represented in a composition form $f(\bm{x}, \bm{s}, \boldsymbol{\theta}) \coloneqq f_{(n,1)}\circ f_{(n-1,1)}\circ\ldots\circ f_{(1,1)}$. Further, we can compose all functions in the preceding operations (from the first layer to $i$-th layer) of \textcolor{black}{local neural function} $f_{(i,j)}$ to get a \textcolor{black}{subnetwork} function $\tilde{f}_{(i,j)}$, \emph{i.e.},
\begin{equation}
\label{eq:subnet}
    \tilde{f}_{(i,j)}(\bm{x},\bm{s}, \boldsymbol{\theta}_{(1,1)}\ldots \boldsymbol{\theta}_{(i,j)}) \coloneqq f_{(i,j)}\circ f_{(i-1,1)}\circ\ldots,
\end{equation}
\textcolor{black}{where the output value of $\tilde{f}_{(i,j)}(\bm{x},\bm{s}, \boldsymbol{\theta}_{(1,1)}\ldots \boldsymbol{\theta}_{(i,j)})$ is $z_{(i, j)}$.} A ``concept of layer'' for the neuron under consideration is not necessary, meaning that the sibling relationship between a neuron and the neuron under consideration ($z_{i, \cdot}$) is only valid upon specifying a common subsequent neuron. When $z_{i, \cdot}$ is under consideration, any input neuron to $z_{i, \cdot}$ will also be denoted as $z_{i-1, 1}, z_{i-1, 2}$, etc.

One can see that \cref{eq:subnet} denotes the function of subnetworks. The largest sub-network of a network is itself and thus $\tilde{f}_{(n,1)}(\bm{x},\bm{s}, \boldsymbol{\theta}_{(1,1)}\ldots \boldsymbol{\theta}_{(n,1)}) = f(\bm{x}, \bm{s}, \boldsymbol{\theta}) $. With this notation, we can introduce the fairness composition property as follows.

\begin{lemma}
\label{lemma:sub}
    A neural network is individually fair if all its second-last sub-networks are individually fair. Formally, given a neural network $f\!\mid_{\boldsymbol{\theta}}$, then 
    \begin{equation}
    \begin{aligned}
    \left(\forall j.\forall \bm{x} \in \mathbb{X}. \forall \bm{s}_1, \bm{s}_2\in\mathbb{S}.~ \left.\tilde{f}_{(n-1,j)}\right|_{\ldots \boldsymbol{\theta}_{(n-1,j)}}(\bm{x}, \bm{s}_2)  = \left.\tilde{f}_{(n-1,j)}\right|_{\ldots \boldsymbol{\theta}_{(n-1,j)}}(\bm{x}, \bm{s}_2)\right)\\
    \to \left(\forall \bm{x} \in \mathbb{X}. \forall \bm{s}_1, \bm{s}_2\in\mathbb{S}. ~f\!\mid_{\boldsymbol{\theta}}(\bm{x}, \bm{s}_1) = f\!\mid_{\boldsymbol{\theta}}(\bm{x}, \bm{s}_2) \right)\qquad
    \end{aligned}
    \end{equation}
\end{lemma}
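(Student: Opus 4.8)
The plan is to exploit the compositional structure of the network already recorded in \eqref{eq:decompose} and \eqref{eq:subnet}. The single output of the network is the neuron $z_{n,1}$, which by construction is the value of a \emph{deterministic} local function $f_{(n,1)}$ applied to its direct predecessors $z_{n-1,1}, z_{n-1,2}, \ldots$ together with its own parameters $\boldsymbol{\theta}_{(n,1)}$; and by the definition of the subnetwork functions, each predecessor value $z_{n-1,j}$ is exactly the output of the second-last subnetwork $\tilde{f}_{(n-1,j)}$. Consequently $f\!\mid_{\boldsymbol{\theta}}$ factors through the vector of second-last-layer activations, namely
\[
f(\bm{x},\bm{s},\boldsymbol{\theta}) \;=\; f_{(n,1)}\bigl(\tilde{f}_{(n-1,1)}(\bm{x},\bm{s},\ldots),\,\tilde{f}_{(n-1,2)}(\bm{x},\bm{s},\ldots),\,\ldots,\,\boldsymbol{\theta}_{(n,1)}\bigr).
\]
The entire argument then reduces to the elementary fact that a function returns equal outputs on equal inputs.

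Concretely, I would first fix an arbitrary non-sensitive input $\bm{x}\in\mathbb{X}$ and an arbitrary pair $\bm{s}_1,\bm{s}_2\in\mathbb{S}$. Invoking the hypothesis (read, as intended, as the individual fairness of each second-last subnetwork) gives, for \emph{every} predecessor index $j$,
\[
\tilde{f}_{(n-1,j)}(\bm{x},\bm{s}_1,\ldots) \;=\; \tilde{f}_{(n-1,j)}(\bm{x},\bm{s}_2,\ldots),
\]
i.e. the activation $z_{n-1,j}$ produced on input $(\bm{x},\bm{s}_1)$ coincides with the one produced on $(\bm{x},\bm{s}_2)$. Next I would substitute these activations into the factorization above. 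Because the parameters $\boldsymbol{\theta}_{(n,1)}$ of the output neuron are held fixed and do not depend on the sensitive feature, the two evaluations at $\bm{s}_1$ and $\bm{s}_2$ feed \emph{identical} arguments into $f_{(n,1)}$, so they yield the identical output $z_{n,1}$. Hence $f\!\mid_{\boldsymbol{\theta}}(\bm{x},\bm{s}_1)=f\!\mid_{\boldsymbol{\theta}}(\bm{x},\bm{s}_2)$, and since $\bm{x},\bm{s}_1,\bm{s}_2$ were arbitrary, the consequent of the implication follows.

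The main obstacle is not any calculation but notational bookkeeping. I must make sure the decomposition correctly identifies the \emph{direct predecessors} of the output neuron with precisely the subnetworks $\tilde{f}_{(n-1,j)}$ that the hypothesis quantifies over, without appealing to a rigid notion of ``layer'' — this is exactly the DAG-level reading the paper emphasises after \eqref{eq:subnet}. I also need to be explicit that the output-neuron parameters are treated as constants common to both evaluations, so that the only potential source of variation is the predecessor activations, which the hypothesis has just neutralised. Once the factorization $f = f_{(n,1)}(\ldots)$ through the second-last activations is pinned down, the remainder is immediate.
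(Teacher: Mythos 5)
Your proposal is correct and follows essentially the same route as the paper's own proof: factor $f$ through the output neuron's local function applied to the second-last subnetwork outputs, invoke the hypothesis to equate those activations for $\bm{s}_1$ and $\bm{s}_2$, and conclude by determinism of $f_{(n,1)}$ on identical arguments. You also rightly read the hypothesis as intended (the displayed antecedent in the lemma contains an evident typo, writing $\bm{s}_2$ on both sides), which the paper's proof likewise does implicitly.
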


\begin{proof}
    We first express $f(\bm{x}, \bm{s}, \boldsymbol{\theta})$ in the following form, where the final layer depends on the sub-network outputs. 
    \begin{equation}
    \begin{aligned}
        f(\bm{x}, \bm{s}, \boldsymbol{\theta}) &= f_{(n,1)}\left(z_{n-1,1}, z_{n-1,2}, \ldots, \boldsymbol{\theta}_{(n,1)}\right)\\
        &=   f_{(n,1)}\left(\tilde{f}_{(n-1,1)}(\bm{x}, \bm{s} ,\ldots\boldsymbol{\theta}_{(n-1,1)}), \tilde{f}_{(n-1,2)}(\bm{x}, \bm{s} ,\ldots\boldsymbol{\theta}_{(n-1,2)})\ldots\right)   
    \end{aligned}
    \end{equation}
    Take any $\bm{x}\in \mathbb{X}$, and any $\bm{s}_1,\bm{s}_2\in \mathbb{S}$. By the hypothesis, for each neuron $j$ on the second-last layer, we can see $ \tilde{f}_{(n-1,j)}(\bm{x}, \bm{s}_1,\ldots\boldsymbol{\theta}_{(n-1,j)}) = \tilde{f}_{(n-1,j)}(\bm{x}, \bm{s}_2,\ldots\boldsymbol{\theta}_{(n-1,j)})$. Thus, the entire inputs to $f_{(n,1)}$ is the same for $\bm{s}_1$ and $\bm{s}_2$. Since the final layer $f_{(n,1)}$ does not depend on the input $\bm{s}$ and it receives the \emph{same} inputs in these two cases, and we can conclude
    \begin{equation}
    \begin{aligned}
        f_{(n,1)}(\tilde{f}_{(n-1,1)}(\bm{x}, \bm{s}_1 ,\ldots\boldsymbol{\theta}_{(n-1,1)}), \tilde{f}_{(n-1,2)}(\bm{x}, \bm{s}_1 ,\ldots\boldsymbol{\theta}_{(n-1,2)})\ldots) =\qquad\\
        f_{(n,1)}(\tilde{f}_{(n-1,1)}(\bm{x}, \bm{s}_2 ,\ldots\boldsymbol{\theta}_{(n-1,1)}), \tilde{f}_{(n-1,2)}(\bm{x}, \bm{s}_2 ,\ldots\boldsymbol{\theta}_{(n-1,2)})\ldots).        
    \end{aligned}
    \end{equation}
    Here, the left-hand side is exactly $f(\bm{x},\bm{s}_1)$ and the right-hand side is $f(\bm{x},\bm{s}_2)$, hence $f$ is individually fair, as required.

\end{proof}

This property can be viewed as a chain for individual fairness, where each subnetwork acts as a link. As long as every link is strong, the chain remains unbroken, guaranteeing individual fairness cascades throughout the network. This modular perspective provides a clear pathway to certifying individual fairness. However, fairness is not static. As the network updates during training, even a strong chain can be weakened if fairness is not carefully preserved. This raises the following question: How do we maintain fairness as the network learns?

The answer is to rely on a fairness updating property as we present below. The fairness updating property also relies on the network composition, but here the focus is gradient.  From \cref{eq:decompose}, each local neural function $f_{(i,j)}$ takes the neuron values of its previous layer and its parameters as input, and outputs its own neuron value $z_{(i,j)}$. Hence, the partial derivative of $f_{(i,j)}$ with respect to its parameters $\boldsymbol{\theta}_{(i,j)}$ would be denoted as $\frac{\partial z_{i,j}}{\partial \boldsymbol{\theta}_{(i,j)}}$. Therefore, an update to the neural network at time step $t-1$ to $t$ can be expressed as follows, where $(\cdot^{(t)})$ denotes time step.

\begin{equation}
\label{eq:update}
    z_{i,j}^{(t)}  = f_{(i,j)}\left(z_{i-1,1}^{(t)}, z_{i-1,2}^{(t)}, \ldots,\boldsymbol{\theta}_{(i,j)}^{(t-1)} - \eta \frac{d\text{loss}}{d z_{n,1}}\left.\frac{\partial z_{n,1}}{\partial \boldsymbol{\theta}_{(i,j)}}\right|_{z_{i,j} = z_{i,j}^{(t-1)}, \boldsymbol{\theta}_{(i,j)} = \boldsymbol{\theta}_{(i,j)}^{(t-1)}}\right)
\end{equation}
The following lemma states that a neural network can be updated without compromising its individual fairness.

\begin{lemma}
\label{lemma:either}
    Suppose a neural network $f\!\mid_{\boldsymbol{\theta}^{(t-1)}}$ is individually fair at time step $t-1$.  Then if the update to each parameter $\boldsymbol{\theta}_{(i,j)}$ from $\boldsymbol{\theta}^{(t-1)}$ to $\boldsymbol{\theta}^{(t)}$ satisfies at least one of the following conditions, then the neural network at time step $t$ is also individually fair. (1) For all sensitive features, the gradients (of this parameter) sum up to 0 (\cref{eq:zero}). (2) For all sensitive features, the gradients of this parameter are the same (\cref{eq:same}).
    \begin{equation}
    \label{eq:zero}
        \textcolor{black}{\forall \bm{x}_1 \in \mathbb{X}.~ \sum _{\bm{s}_1\in \mathbb{S}}}\left.\frac{\partial z_{n,1}}{\partial \boldsymbol{\theta}_{(i,j)}}\right|_{\bm{x}=\bm{x}_1, \bm{s}=\bm{s}_1, \boldsymbol{\theta}_{(i,j)} = \boldsymbol{\theta}_{(i,j)}^{(t-1)}} = 0
    \end{equation}
    \begin{equation}
    \label{eq:same}
        \forall \bm{x}_1\in\mathbb{X}, \bm{s}_1, \bm{s}_2\in\mathbb{S}.~ \left.\frac{\partial z_{n,1}}{\partial \boldsymbol{\theta}_{(i,j)}}\right|_{\bm{x}=\bm{x}_1, \bm{s}=\bm{s}_1, \boldsymbol{\theta}_{(i,j)} = \boldsymbol{\theta}_{(i,j)}^{(t-1)}} = \left.\frac{\partial z_{n,1}}{\partial \boldsymbol{\theta}_{(i,j)}}\right|_{\bm{x}=\bm{x}_1, \bm{s}=\bm{s}_2,\boldsymbol{\theta}_{(i,j)} = \boldsymbol{\theta}_{(i,j)}^{(t-1)}}  
    \end{equation}
\end{lemma}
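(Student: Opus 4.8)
The plan is to track a single scalar, the \emph{fairness gap} $\delta(\boldsymbol{\theta}) \coloneqq f\!\mid_{\boldsymbol{\theta}}(\bm{x},\bm{s}_1) - f\!\mid_{\boldsymbol{\theta}}(\bm{x},\bm{s}_2)$ for arbitrary fixed $\bm{x}\in\mathbb{X}$ and $\bm{s}_1,\bm{s}_2\in\mathbb{S}$, and to show that the step from $\boldsymbol{\theta}^{(t-1)}$ to $\boldsymbol{\theta}^{(t)}$ keeps $\delta$ equal to zero. The hypothesis that $f\!\mid_{\boldsymbol{\theta}^{(t-1)}}$ is individually fair gives $\delta(\boldsymbol{\theta}^{(t-1)})=0$, so the entire task reduces to showing that the parameter update does not open this gap. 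I would first record a preliminary observation: because $z_{n,1}$ takes the same value for every $\bm{s}$ at time $t-1$, the upstream factor $\frac{d\text{loss}}{d z_{n,1}}$ in \cref{eq:update} is itself independent of $\bm{s}$, which later lets me pull it out of any sum over sensitive values.

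Next I would differentiate the gap with respect to one parameter, giving $\frac{\partial \delta}{\partial \boldsymbol{\theta}_{(i,j)}} = \left.\frac{\partial z_{n,1}}{\partial \boldsymbol{\theta}_{(i,j)}}\right|_{\bm{s}_1} - \left.\frac{\partial z_{n,1}}{\partial \boldsymbol{\theta}_{(i,j)}}\right|_{\bm{s}_2}$, which is precisely the difference governed by \cref{eq:same}. The argument then splits per parameter. For a parameter obeying \cref{eq:zero}, the preliminary observation factors the $\bm{s}$-independent loss term out of the aggregated gradient, and the surviving sum $\sum_{\bm{s}}\left.\frac{\partial z_{n,1}}{\partial \boldsymbol{\theta}_{(i,j)}}\right|_{\bm{s}}$ is zero; hence the net update to $\boldsymbol{\theta}_{(i,j)}$ vanishes and the coordinate is frozen at its fair value. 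For a parameter obeying \cref{eq:same}, the two evaluations of $\frac{\partial z_{n,1}}{\partial \boldsymbol{\theta}_{(i,j)}}$ coincide, so $\frac{\partial \delta}{\partial \boldsymbol{\theta}_{(i,j)}}=0$: moving along this coordinate does not change the gap to first order.

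I would assemble these facts in the gradient-flow picture, writing $\frac{d}{dt}\delta = \sum_{(i,j)} \frac{\partial \delta}{\partial \boldsymbol{\theta}_{(i,j)}}\,\frac{d\boldsymbol{\theta}_{(i,j)}}{dt}$. For every coordinate at least one factor of the summand vanishes --- the velocity $\frac{d\boldsymbol{\theta}_{(i,j)}}{dt}$ under \cref{eq:zero}, or the sensitivity $\frac{\partial \delta}{\partial \boldsymbol{\theta}_{(i,j)}}$ under \cref{eq:same} --- so $\frac{d}{dt}\delta=0$ term by term. Since $\delta$ starts at zero it remains zero, and as $\bm{x},\bm{s}_1,\bm{s}_2$ were arbitrary, $f\!\mid_{\boldsymbol{\theta}^{(t)}}$ satisfies \cref{def:if}.

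I expect the main obstacle to be transferring this conclusion from the continuous flow to the single discrete step of \cref{eq:update}, since \cref{eq:zero,eq:same} are imposed only at $\boldsymbol{\theta}^{(t-1)}$ and a finite step could in principle curve the gap away from zero even when its first-order sensitivity vanishes at the start. The \cref{eq:zero} branch is unproblematic because the coordinate does not move at all. The care is needed for the \cref{eq:same} branch: I would argue that $\bm{s}$-independence of $\frac{\partial z_{n,1}}{\partial \boldsymbol{\theta}_{(i,j)}}$ at a fair configuration signals that $\boldsymbol{\theta}_{(i,j)}$ sits off every path carrying the sensitive signal to the output, so that updating it cannot reintroduce a dependence on $\bm{s}$ and fairness is preserved exactly rather than merely to first order. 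Making this structural claim rigorous --- most naturally by an induction over the composition in \cref{eq:decompose}, showing each neuron value stays $\bm{s}$-independent once its inputs are fair and all sensitive-path weights are frozen by \cref{eq:zero} --- is where the bulk of the work lies.
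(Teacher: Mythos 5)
Your proposal follows essentially the same route as the paper's own proof: both track the fairness gap $g(\bm{x},\bm{s}_1,\bm{s}_2,\boldsymbol{\theta})=f(\bm{x},\bm{s}_1,\boldsymbol{\theta})-f(\bm{x},\bm{s}_2,\boldsymbol{\theta})$, split the parameters per condition, observe that coordinates satisfying \cref{eq:zero} receive a zero net update, and argue that coordinates satisfying \cref{eq:same} have $\nabla_{\boldsymbol{\theta}_\alpha}g=0$ so the gap is unchanged under the step. The discrete-versus-continuous obstacle you flag at the end is genuine, but the paper does not close it either --- its proof explicitly settles for ``the best linear approximation near $\boldsymbol{\theta}^{(t-1)}$ for small $\eta$'' --- so your argument already matches the paper's level of rigour, and your proposed structural induction would, if carried out, strengthen the result beyond what the paper establishes.
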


\begin{proof}
    Let us define the difference in outputs for two sensitive inputs $\bm{s}_1,\bm{s}_2$ (with fixed nonsensitive input $\bm{x}$) as $ g(\bm{x}, \bm{s}_1,\bm{s}_2, \boldsymbol{\theta}) = f(\bm{x}, \bm{s}_1,\boldsymbol{\theta}) - f(\bm{x}, \bm{s}_2,\boldsymbol{\theta})$. Individual fairness at time $t-1$ means for all $\bm{x},\bm{s}_1,\bm{s}_2$, $g(\bm{x},\bm{s}_1,\bm{s}_2,\boldsymbol{\theta}^{(t-1)}) = 0$. We show that $g(\bm{x},\bm{s}_1,\bm{s}_2,\boldsymbol{\theta}^{(t)}) = 0$ for all $\bm{x},\bm{s}_1,\bm{s}_2$ after the update, if each parameter $\boldsymbol{\theta}_{(i,j)}$ in $\boldsymbol{\theta}$ satisfies at least one of the two conditions.

    The update $\Delta \boldsymbol{\theta}_{(i,j)}$ is proportional to the sum of the partial derivatives. Hence, if aggregating across all $\bm{s}$ yields zero, no net change favours one $\bm{s}$ over another. Then, by function restriction, we let $\boldsymbol{\theta}_\alpha$ represent parameters that does not satisfy this condition and $f(\bm{x}, \bm{s},\boldsymbol{\theta}) = f\!\mid_{\boldsymbol{\theta}\setminus\boldsymbol{\theta}_\alpha}(\bm{x}, \bm{s}, \boldsymbol{\theta}_{\alpha})$. Therefore, we get
    \begin{equation}
    \begin{aligned}
        f\!\mid_{\boldsymbol{(\theta}\setminus\boldsymbol{\theta}_\alpha)^{(t)}}&(\bm{x}, \bm{s}_1, \boldsymbol{\theta}_{\alpha}^{(t-1)}) = f\!\mid_{\boldsymbol{(\theta}\setminus\boldsymbol{\theta}_\alpha)^{(t-1)}}(\bm{x}, \bm{s}_1, \boldsymbol{\theta}_{\alpha}^{(t-1)})\\
        &= f\!\mid_{\boldsymbol{(\theta}\setminus\boldsymbol{\theta}_\alpha)^{(t-1)}}(\bm{x}, \bm{s}_2, \boldsymbol{\theta}_{\alpha}^{(t-1)}) = f\!\mid_{\boldsymbol{(\theta}\setminus\boldsymbol{\theta}_\alpha)^{(t)}}(\bm{x}, \bm{s}_2, \boldsymbol{\theta}_{\alpha}^{(t-1)}).
    \end{aligned}
    \end{equation}
    
    Further, suppose gradients of $\boldsymbol{\theta}_\alpha$ does not depend on $\bm{s}$. Then, for all $\bm{x}_1\in\mathbb{X}$, for all possible $\bm{s}_1,\bm{s}_2$, $\nabla_{\boldsymbol{\theta}_{\alpha}} (f\!\mid_{\boldsymbol{\theta}\setminus\boldsymbol{\theta}_\alpha}(\bm{x}_1, \bm{s}_1, \boldsymbol{\theta}_{\alpha})) = \nabla_{\boldsymbol{\theta}_{\alpha}} (f\!\mid_{\boldsymbol{\theta}\setminus\boldsymbol{\theta}_\alpha}(\bm{x}_1, \bm{s}_2, \boldsymbol{\theta}_{\alpha}))$ at time $t-1$, and thus $\nabla_{\boldsymbol{\theta}_{\alpha}} g(\bm{x}_1, \bm{s}_1,\bm{s}_2, \boldsymbol{\theta}) = 0$. For small update $\eta$, the best linear approximation near $\boldsymbol{\theta}^{(t-1)}$ would be  
    \begin{equation}
    \begin{aligned}
        &g\!\mid_{\boldsymbol{\theta}\setminus\boldsymbol{\theta}_\alpha}\left(\bm{x}_1, \bm{s}_1,\bm{s}_2, \boldsymbol{\theta}^{(t)}_\alpha\right) \\
        &= g\!\mid_{\boldsymbol{\theta}\setminus\boldsymbol{\theta}_\alpha}\left(\bm{x}_1, \bm{s}_1,\bm{s}_2, \boldsymbol{\theta}^{(t-1)}_\alpha\right) + \nabla_{\boldsymbol{\theta}_{\alpha}} g(\bm{x}_1, \bm{s}_1,\bm{s}_2, \boldsymbol{\theta})\left|_{\boldsymbol{\theta}^{(t-1)}_\alpha}~\Delta\boldsymbol{\theta}_{\alpha}\right. = 0.
    \end{aligned}
    \end{equation}
    Therefore, the network at time $t$ is also individually fair.
\end{proof}

Intuitively, individual fairness is preserved if either the gradients of a parameter, aggregated across all sensitive features, sum to zero, or if the gradients are identical for all sensitive features. These conditions ensure that updates do not introduce bias, therefore maintaining fairness as the network learns.

Together, \cref{lemma:sub} and \cref{lemma:either} establish a framework where fairness propagates within the network structure and is constantly preserved during training. This paves the way for developing a certified fair training algorithm as we present next.

\subsection{One-step Fair Update With Randomised Response}

In the following, we discuss in detail how we design the training algorithm so that it can effectively leverage the properties introduced above. Specifically, we add a randomised response mechanism to the sensitive feature during training. The proposed training is described in \cref{alg:gd}.  Essentially, we use a randomised response mechanism (also represented in \cref{alg:gd} Lines 6 - 12) to protect the sensitive feature. We let $\omega_0, \omega_1, \omega_2, \ldots$ denote the sensitive feature values, and the released sensitive feature follows the probability presented in \cref{eq:rrs_plus}, where $\mathbf{1}_{\text{condition}}$ returns 1 if the condition holds, and 0 otherwise.

\begin{equation}
    \label{eq:rrs_plus}
    \forall i \in \set{\omega_0, \omega_1, \omega_2, \ldots, \omega_{\abs{\mathbb{S}}-1}}.\quad P(\textnormal{s} =\omega_j| \bm{s}=\omega_i) = 
        \frac{e^{\gamma\cdot\mathbf{1}_{i=j}}}{e^\gamma + \abs{\mathbb{S}}-1}
\end{equation}
For example, in a common (and simplified) binary setting, as done by \citet{kim2025fairquant}, sensitive categorical values are $\omega_0 = -1$ and $\omega_1 = +1$. In such a case, \cref{eq:rrs_plus} can be simplified as \cref{eq:rrs} in a binary case, where we just substitute $\abs{\mathbb{S}}=2$.
\begin{equation}
    \label{eq:rrs}
    \forall i \in \set{-1,1}.\quad P(\textnormal{s} =j| \bm{s}=i) = \begin{cases}
        \frac{e^\gamma}{e^\gamma + 1}, & j = i,\\
        \frac{1}{e^\gamma + 1}, & j = -i .
    \end{cases}
\end{equation}
In some other cases, $\omega_0, \omega_1, \omega_2, \ldots$ may represent multiple sensitive features, \emph{e.g.}, the UCI Credit dataset has a feature ``Personal status and sex'', and $\omega_0, \omega_1, \omega_2, \ldots$ could represent ``male, single'', ``female, divorced/separated/married'', etc. Thus, a simple way to handle multiple sensitive features simultaneously would be to use a single composite feature~\cite{benussi2022individual}.

\begin{algorithm}[t]
\caption{Gradient Descent with Randomised Responded Sensitive Features\label{alg:gd}}
\begin{algorithmic}[1] % <--- Enables line numbers
\Require $\eta$ (learning rate), $\bm{\theta}_0$ (parameters), $\set{(\bm{x}_i, \bm{s}_i, y_i)}_{i=1}^n$ (training dataset), $ \ell(\bm{\theta}, \bm{x}, \bm{s}, y)$ (objective/loss function), $\gamma$ (privacy budget)    
\Ensure $\bm{\theta}_t$
\State $p \gets e^\gamma / (e^\gamma + |\mathbb{S}| - 1)$
\For{$t = 1$ to $\ldots$}
    \State $\bm{g}_t \gets 0$
    \For{$i = 1,\ldots,$}
        \For{$j = 1, \ldots, n$}
            \State Sample $q \sim U(0, 1)$
            \If{$q > p$}
                \State $\hat{\bm{s}} \sim U(\mathbb{S} \setminus \{\bm{s}_j\})$
            \Else
                \State $\hat{\bm{s}} \gets \bm{s}_j$
            \EndIf
            \State $\bm{g}_t \gets \bm{g}_t + \nabla_{\bm{\theta}} \ell(\bm{\theta}_{t-1}, \bm{x}_i, \hat{\bm{s}}, y_i)/n$
        \EndFor
    \EndFor
    \State $\bm{\theta}_t \gets \bm{\theta}_{t-1} - \eta \bm{g}_t/i$
\EndFor
\end{algorithmic}
\end{algorithm}

The subsequent question is to determine $\gamma$ such that individual fairness can be maintained during updating. We find that if it makes sure that the gradients for each parameter of an individually fair neuron sum up to zero, then the update is expected to preserve fairness. This is formally captured in \cref{thm:update}.

\begin{theorem}
\label{thm:update}
    Given an individually fair neural network $f\!\mid_{\boldsymbol{\theta}  = \boldsymbol{\theta}^{(t-1)}}$, if it satisfies the following condition, then the one-time updated neural network $f\!\mid_{\boldsymbol{\theta}  = \boldsymbol{\theta}^{(t)}}$ is expected to be individually fair.
    
    \begin{equation}
    \label{eq:zerosum}
    \begin{aligned}
        &\mathbb{L}=\left\{(i, j)\left|\quad i>0,~ \forall \bm{x}, \bm{s}_1, \bm{s}_2. \tilde{f}_{(i,j)}(\bm{x},\bm{s}_1, \ldots \boldsymbol{\theta}_{(i,j)}) =\tilde{f}_{(i,j)}(\bm{x},\bm{s}_2, \ldots \boldsymbol{\theta}_{(i,j)}),\right.\right.\\
        &\left.\left.\exists j',\bm{x}, \bm{s}_1, \bm{s}_2.~ \tilde{f}_{(i-1,j')}(\bm{x},\bm{s}_1, \ldots \boldsymbol{\theta}_{(i-1,j')}) \neq\tilde{f}_{(i-1,j')}(\bm{x},\bm{s}_2, \ldots \boldsymbol{\theta}_{(i-1,j')})\right.\right\}.\\
        &\forall (\tilde{\imath}, \tilde{\jmath})\in\mathbb{L}.~\forall i^*\le \tilde{\imath}, j^*. \forall\bm{x}_1\in\mathbb{X}. \quad\operatorname{E} \left[\left.\frac{\partial z_{i^*,j^*}}{\partial \boldsymbol{\theta}_{(i^*,j^*)}}\right|_{\bm{x}=\bm{x}_1, \bm{s}=\textnormal{s}, \boldsymbol{\theta}_{(i^*,j^*)} = \boldsymbol{\theta}_{(i^*,j^*)}^{(t-1)}}\right] = 0
    \end{aligned}
    \end{equation}

Here, $\tilde{f}_{(i,j)}$ represents a subnetwork that takes the same input as the full network and outputs the neuron value $z_{(i, j)}$, as seen in \cref{eq:subnet}.

\end{theorem}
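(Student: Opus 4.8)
The plan is to combine the two structural results already established: \cref{lemma:sub}, which certifies the whole network from a collection of fair subnetworks, and \cref{lemma:either}, which gives two alternative conditions under which a single update preserves fairness. The set $\mathbb{L}$ in \cref{eq:zerosum} is engineered to be the \emph{fairness frontier}: each $(\tilde{\imath},\tilde{\jmath})\in\mathbb{L}$ is a neuron whose subnetwork $\tilde{f}_{(\tilde{\imath},\tilde{\jmath})}$ is already individually fair, yet which has at least one predecessor subnetwork that is not. First I would observe that this frontier forms a cut separating the possibly-unfair input side from the fair output side: every neuron downstream of $\mathbb{L}$ receives the sensitive input $\bm{s}$ only through the fair frontier outputs. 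A routine generalisation of \cref{lemma:sub} — treating the frontier outputs as the ``inputs'' to the remaining network rather than literally the second-last layer — then reduces the global claim to showing that each frontier subnetwork $\tilde{f}_{(\tilde{\imath},\tilde{\jmath})}$ stays, in expectation, individually fair after one update.

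I would then split the parameters according to their position relative to the frontier and discharge each group with one of the two hypotheses of \cref{lemma:either}. For a parameter $\boldsymbol{\theta}_{(i^*,j^*)}$ lying strictly downstream of the frontier, the inputs to its local function are already $\bm{s}$-independent, so its gradient is identical for every sensitive value; this matches the ``equal gradients'' condition \cref{eq:same}, and such parameters move fairly by themselves. For a parameter with $i^*\le\tilde{\imath}$, i.e.\ at or before the frontier, I would invoke the randomised response of \cref{eq:rrs_plus}: the released sensitive feature is drawn from a distribution fixed by $\gamma$, and \cref{eq:zerosum} requires precisely that the gradient, averaged over this draw, vanishes. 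This is the expected-value analogue of the ``gradients sum to zero'' condition \cref{eq:zero}, so in expectation the aggregate update carries no net bias toward any $\bm{s}$. Every parameter therefore satisfies one of the two alternatives of \cref{lemma:either}, read in expectation, and the frontier subnetworks — hence the full network — are expected to remain individually fair.

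The hard part is bridging the \emph{local} derivative $\partial z_{i^*,j^*}/\partial\boldsymbol{\theta}_{(i^*,j^*)}$ that appears in \cref{eq:zerosum} and the \emph{global} loss gradient that actually drives the update in \cref{eq:update}. I would resolve this with the chain-rule factorisation $\partial z_{n,1}/\partial\boldsymbol{\theta}_{(i^*,j^*)} = (\partial z_{n,1}/\partial z_{i^*,j^*})(\partial z_{i^*,j^*}/\partial\boldsymbol{\theta}_{(i^*,j^*)})$ and argue that the backward factor, whose dependence on $\bm{s}$ is routed through the fair frontier, does not spoil the zero-expectation property of the local factor. The second delicate point, inherited from the proof of \cref{lemma:either}, is that the guarantee is only first-order: I would expand the fairness gap $g$ to its best linear approximation around $\boldsymbol{\theta}^{(t-1)}$, take expectations over the randomised response, and use \cref{eq:zerosum} to force the expected first-order change of $g$ to zero. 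This first-order, in-expectation cancellation is exactly why the conclusion asserts that $f\!\mid_{\boldsymbol{\theta}=\boldsymbol{\theta}^{(t)}}$ is \emph{expected} to be individually fair rather than fair for every realisation of the randomisation.
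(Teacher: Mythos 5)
Your proposal matches the paper's proof in both structure and substance: the same split of parameters at the frontier $\mathbb{L}$, discharging those with $i^*\le\tilde{\imath}$ via the expected sum-to-zero condition \cref{eq:zero} and those downstream via the equal-gradients condition \cref{eq:same} (since their inputs pass through the already-fair frontier), all assembled through \cref{lemma:either} with the same first-order, in-expectation caveat. The only cosmetic difference is that the paper argues the downstream case through a Taylor expansion of the expectation over the randomised response (with a $\operatorname{Var}[z_{i^*,j^*}]\to 0$ limit and a recursive step), whereas you argue it directly from the $\bm{s}$-independence of the frontier outputs; the conclusion and mechanism are the same.
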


\begin{proof}
    According to \cref{lemma:either}, if the described updates of parameters in this theorem fulfil either condition, then the updated neural network is expected to be individually fair. The condition in \cref{eq:zerosum} states that some neurons' expected updates already satisfy \cref{eq:zero}. The subsequent task is whether the remaining parameters also do. Thus, we seek to prove whether $ \forall (\tilde{\imath}, \tilde{\jmath})\in\mathbb{L}.~\forall i^*> \tilde{\imath}, j^*. \forall\bm{x}_1\in\mathbb{X}, \bm{s}_1, \bm{s}_2\in\mathbb{S},$
    \begin{equation}
    \begin{aligned}
        \left.\frac{\partial z_{i^*,j^*}}{\partial \boldsymbol{\theta}_{(i^*,j^*)}}\right|_{\bm{x}=\bm{x}_1, \bm{s}=\bm{s}_1, \boldsymbol{\theta}_{(i^*,j^*)} = \boldsymbol{\theta}_{(i^*,j^*)}^{(t-1)}} = \left.\frac{\partial z_{i^*,j^*}}{\partial \boldsymbol{\theta}_{(i^*,j^*)}}\right|_{\bm{x}=\bm{x}_1, \bm{s}=\bm{s}_2,\boldsymbol{\theta}_{(i^*,j^*)} = \boldsymbol{\theta}_{(i^*,j^*)}^{(t-1)}~.}
    \end{aligned}
    \end{equation}
    In general, these gradients depend on the partial derivative of the subsequent neurons of each neuron. Since the current neural network state is individually fair, for all neurons on layer $i^*+1$, the gradients would be the same for different sensitive features. Thus, $\partial z_{n,1}/\partial \boldsymbol{\theta}_{(i^*+1,\cdot)}$ is proportional to $\partial z_{i^*+1,\cdot}/\partial \boldsymbol{\theta}_{(i^*+1,\cdot)}$, which depends on $z_{(i^*,j^*)}$. Thus, we get the following expectation.
    \begin{equation}
    \begin{aligned}
    \operatorname{E}&\left[f_{(i^*+1,\cdot)}\left(z_{i^*,j^*},  \ldots, \boldsymbol{\theta}_{(i^*+1,\cdot)}\right)\right]\\
    &= f_{(i^*+1,\cdot)}\left(\operatorname{E}\left[z_{i^*,j^*}\right],  \ldots\right)   + \frac{\nabla^2 f_{(i^*+1,\cdot)}\left(\operatorname{E}\left[z_{i^*,j^*}\right],  \ldots\right)   }{2} \operatorname{Var}\left[z_{i^*,j^*}\right] 
    \end{aligned}
    \end{equation}
    We already observed that $\operatorname{E}\left[z_{i^*,j^*}\right]$ remains the same for different sensitive features. Therefore, for $\operatorname{Var}\left[z_{i^*,j^*}\right] \to 0$, the update of $\boldsymbol{\theta}_{(i^*+1,\cdot)}$ is expected to be 0, and the subsequent neurons can be proven recursively.
    
\end{proof}

The subsequent task would be to find all such neurons in the given individually fair neural network and compute their gradients to determine the value of $\gamma$.   Specifically, we would like to solve Equations~(\ref{eq:solve}) to obtain $\gamma$ before training. In \cref{eq:rrs_plus}, we can see that $P(s|\bm{s})$, the effective probability distribution of $\textnormal{s}$, is a function of $\gamma$. Thus, $\gamma$ is the only variable to solve in Equations~(\ref{eq:solve}), and other quantities like $\frac{\partial z_{i,j}}{\partial \boldsymbol{\theta}_{(i,j)}}$ are known parameters from automatic differentiation.  If we can get at least one $\gamma$ solution, then we can certify this fairness-preserving updating.
\begin{equation}
\label{eq:solve}
\begin{aligned}
    \sum_{s\in\set{\omega_0, \omega_1, \ldots}} \left.\frac{\partial z_{i,j}}{\partial \boldsymbol{\theta}_{(i,j)}}\right|_{ \boldsymbol{\theta}_{(i,j)} = \boldsymbol{\theta}_{(i_1,j_1)}^{(t-1)}} P(s|\bm{s}) &=     \sum_{s\in\set{\omega_0, \omega_1, \ldots}} \left.\frac{\partial z_{i,j}}{\partial \boldsymbol{\theta}_{(i,j)}}\right|_{ \boldsymbol{\theta}_{(i,j)} = \boldsymbol{\theta}_{(i_1,j_1)}^{(t-1)}} \frac{e^{\gamma\cdot\mathbf{1}_{s=\bm{s}}}}{e^\gamma + \abs{\mathbb{S}}-1} &=0\\
    \sum_{s\in\set{\omega_0, \omega_1, \ldots}} \left.\frac{\partial z_{i,j}}{\partial \boldsymbol{\theta}_{(i,j)}}\right|_{ \boldsymbol{\theta}_{(i,j)} = \boldsymbol{\theta}_{(i_2,j_2)}^{(t-1)}} P(s|\bm{s}) &=     \sum_{s\in\set{\omega_0, \omega_1, \ldots}} \left.\frac{\partial z_{i,j}}{\partial \boldsymbol{\theta}_{(i,j)}}\right|_{ \boldsymbol{\theta}_{(i,j)} = \boldsymbol{\theta}_{(i_2,j_2)}^{(t-1)}} \frac{e^{\gamma\cdot\mathbf{1}_{s=\bm{s}}}}{e^\gamma + \abs{\mathbb{S}}-1}&= 0\\
    &&\ldots
\end{aligned}
\end{equation}

Note that we get the parameters of the $\textnormal{s}$ distribution, and we would then be able to get the distribution of each $z_{i,j}$ in \cref{eq:zerosum}. Furthermore, each has a zero expectation and some variance $\sigma_{i,j}$. At actual training time, we will sample the given \textit{training} input with size $\delta$. Therefore, based on Chebyshev's inequality~\cite{chebyshev1867valeurs}, we can derive the following non-asymptotic bound for the probability that the actual update in these parameters is greater than a real number threshold $\tau$.
\begin{equation}
    P\left(z^{(t)}_{i,j} - z^{(t-1)}_{i,j} > \tau\right) \;\le\; \frac{\sigma_{i,j}^2/\delta}{\tau^2}.
\end{equation}
Since $\tau$ is a fixed number given the parameter values, we can always increase the number $\delta$ to keep the outliers at a small amount.

Intuitively, \cref{thm:update} guarantees that fairness is maintained during parameter updates when setting an appropriate prior in the randomised response. Then, \cref{alg:gd} effectively implements this update rule. Together, the network can be updated without compromising its fairness properties.

This method is also scalable and practical, making it suitable for real-world applications. Our approach embeds a guarantee of fairness into the training process, avoiding the need for extensive post hoc analysis such as verification.

In summary, the proposed fairness guaranteeing method starts from an individually fair initialisation (\cref{lemma:init}) and preserves it during training (\cref{lemma:sub,lemma:either,thm:update,alg:gd}). The result is a neural network that is both effective and provably fair, ensuring fairness in sensitive applications.

\begin{proposition}
\label{prop:sound}
    Let $A$ denote our training in \cref{alg:gd} such that $A(f\!\mid_{\boldsymbol{\theta}})$ would be the updated state of $f\!\mid_{\boldsymbol{\theta}}$. Then, when training an individually fair neural network with $A$, the result updated neural network is guaranteed to be also individually fair. Formally, 
    \begin{equation}
        (F(f\!\mid_{\boldsymbol{\theta}}, \mathbb{X}, \mathbb{S})=1) \to (F(A(f\!\mid_{\boldsymbol{\theta}}), \mathbb{X}, \mathbb{S}) = 1).
    \end{equation}   
\end{proposition}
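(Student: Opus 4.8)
The plan is to obtain this proposition as a direct corollary of \cref{thm:update}, because \cref{alg:gd} is precisely the algorithmic realisation of the one-step fair update analysed there. First I would unfold the definition of $A$: one update replaces each training sample's sensitive feature $\bm{s}_j$ by a randomly responded value $\hat{\bm{s}}$ drawn from the distribution in \cref{eq:rrs_plus}, accumulates the gradients of the loss evaluated at these perturbed inputs into $\bm{g}_t$, and then performs a single descent step $\boldsymbol{\theta}_t \gets \boldsymbol{\theta}_{t-1} - \eta\bm{g}_t/i$. Hence $A(f\!\mid_{\boldsymbol{\theta}})$ is exactly the time-$t$ network $f\!\mid_{\boldsymbol{\theta}^{(t)}}$ produced from the fair network $f\!\mid_{\boldsymbol{\theta}^{(t-1)}}$ by the rule of \cref{eq:update}, and repeated application then extends the single-step claim to the whole training run by induction on $t$.

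Next I would verify that the hypothesis of \cref{thm:update} is met under this mechanism. Taking the expectation of the accumulated gradient over the randomised response, the contribution of each sensitive value $\omega_i$ is weighted by $P(\textnormal{s}=\omega_j \mid \bm{s}=\omega_i)$, so the expected per-parameter gradient is exactly the weighted sum appearing on the left-hand side of \cref{eq:solve}. Since $\gamma$ is chosen to solve \cref{eq:solve}, this expected gradient sum vanishes for every parameter at or below the fairness boundary captured by the index set $\mathbb{L}$ in \cref{eq:zerosum}, which is precisely the zero-sum condition \cref{eq:zero} of \cref{lemma:either} holding in expectation. Invoking \cref{thm:update}, the parameters strictly above this boundary are then handled by its recursive argument: since the network is fair at those layers the equal-gradient condition \cref{eq:same} applies, and the vanishing expectation together with $\operatorname{Var}[z_{i^*,j^*}]\to 0$ propagates neuron by neuron to the output.

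Finally I would close the implication. Assuming $F(f\!\mid_{\boldsymbol{\theta}}, \mathbb{X}, \mathbb{S})=1$, the two steps above show that $f\!\mid_{\boldsymbol{\theta}^{(t)}} = A(f\!\mid_{\boldsymbol{\theta}})$ again satisfies \cref{def:if}, so $F(A(f\!\mid_{\boldsymbol{\theta}}), \mathbb{X}, \mathbb{S})=1$, as claimed. This part is pure bookkeeping that follows mechanically once \cref{thm:update} is in place.

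The step I expect to be the crux is bridging the \emph{expected} fairness guaranteed by \cref{thm:update} with the \emph{deterministic} implication asserted here. The randomised response forces the zero-sum condition only in expectation, and a finite training batch realises a random update whose deviation from its mean is controlled, through Chebyshev's inequality, by $\sigma_{i,j}^2/(\delta\tau^2)$. I would therefore either state the guarantee in the same expected sense as \cref{thm:update}, or argue that by enlarging the batch size $\delta$ (equivalently driving each $\operatorname{Var}[z_{i^*,j^*}]$ toward zero in the recursion) the fairness-relevant output differences concentrate at zero, so preserved fairness holds with probability approaching one. Making this concentration fully rigorous — in particular bounding how the per-layer variances compound through the recursive argument of \cref{thm:update} — is the delicate part; everything else reduces to \cref{lemma:either} and \cref{thm:update}.
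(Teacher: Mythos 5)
Your proposal follows essentially the same route the paper intends: the paper gives no explicit proof of \cref{prop:sound} at all, but presents it as the summary consequence of \cref{lemma:either}, \cref{thm:update}, and the fact that \cref{alg:gd} with $\gamma$ chosen to solve \cref{eq:solve} realises the zero-expected-gradient condition \cref{eq:zero}. Your unfolding of $A$ as one step of \cref{eq:update}, the induction over training steps, and the verification that the randomised-response weighting reproduces the left-hand side of \cref{eq:solve} is exactly the chain of reasoning the paper relies on implicitly.

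The crux you flag is real and is \emph{not} resolved by the paper either: \cref{thm:update} only concludes that the updated network is ``expected to be individually fair,'' and the accompanying Chebyshev bound controls the deviation probability by $\sigma_{i,j}^2/(\delta\tau^2)$ without driving it to zero for any finite batch. \cref{prop:sound}, however, asserts a deterministic implication $F=1 \to F=1$. Your two proposed remedies (restating the guarantee in expectation, or a concentration argument giving fairness with probability approaching one as $\delta$ grows) are both strictly weaker than the stated proposition, and the recursive compounding of per-layer variances that you identify as delicate is indeed left unaddressed in the paper's argument for \cref{thm:update}. So your proof is as complete as the paper's own justification permits; the residual gap between the expectation-level guarantee and the deterministic claim belongs to the paper, not to your reconstruction, and you were right to surface it rather than paper over it.
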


\section{Experiments}
\label{sec:exp}

In the following, we conduct experiments to quantitatively evaluate the proposed method. We focus on empirical effectiveness and efficiency. Specifically, we attempt to answer the following research questions. (1) Would our model initialisations be provably fair? (2) Empirically, would our trained classifiers exhibit fair individual fairness? (3) How good is the utility of our trained classifier? (4) Does the training process maintain efficiency while ensuring individual fairness?

We first describe the experiment setup, including the benchmarking datasets, baselines, and implementation details. Then, we report the results, give our discussions, and answer each of these research questions.

\subsection{Experimental Setup}

\paragraph{Benchmarks}
The experiment includes six datasets and classification tasks that are commonly studied~\cite{zhang2020white,zheng2022neuronfair} where individual fairness is a concern. Specifically, Bank Marketing (Bank)~\cite{moro2014data} has 45,000 records to predict whether a client will subscribe to a term deposit~\cite{asuncion2007uci}. German Credit (Credit) has 1,000 samples to assess credit risk~\cite{statlog_(german_credit_data)_144}. Census Income (Adult)~\cite{adult_2} has 32,561 records to predict if one earns over \$50,000. Compas~\cite{angwin2022machine} has 6,172 samples to predict recidivism risk. Heritage Health Prize (Health)~\cite{health_prize} has 218,415 historical claims data that can predict patient mortality. Lastly, Law School Admissions Bar Passage (Law)~\cite{wightman_2017} tracks 22,408 records from 1991 through 1997 to predict
if a student passes the bar. For each dataset, a random set containing 20\% of samples is used for testing, and the remaining serves for training and validation. The respective sensitive features are race and gender for Census, age for Bank, age and gender for Credit, race for Compas, and gender for Health and Law.

\paragraph{Baselines}

We compare the proposed method with existing methods that provide some guarantee on individual fairness. The baselines include prior works on training methods that improve and certify local (LCIFR~\cite{ruoss2020learning}) or distributional (U-DIF~\cite{wicker2023certification}) individual fairness. In detail, LCIFR learns fair representations with a loss function regulated by the similarity constraint, and certifies fairness afterwards. Similarly, U-DIF adds a convex approximation of fairness constraints to the training loss and provides post-hoc distributional guarantees. We also include MILP~\cite{benussi2022individual}, which indirectly guarantees individual fairness by certifying a maximum possible global outcome change. Note that these guarantees are fundamentally different from ours.    Other fairness-focused training methods \cite{hardt2016equality} exist. However, they are not included in the comparison because they do not provide (and also do not aim to provide) a formal fairness guarantee. Besides, we include empirical risk minimisation (ERM) that optimises towards lower cross entropy between label and system prediction~\cite{vapnik1999nature}. ERM is a standard method focused on accuracy.

\paragraph{Implementation}

For all classification tasks, we use a six-layer fully connected neural network, following prior fairness studies~\cite{zhang2020white,zheng2022neuronfair}. This architecture is composed of 5 hidden layers that are activated with ReLU, and one non-activated output layer. The number of neurons at each hidden layer is [64, 32, 16, 8, 4]. The models are initialised with zero mean in weights and small variance to break symmetry. During training, the learning rate is set to 0.01 and the batch size is set to 64. Our experiments are conducted on a GPU server using a Standard\_NC8as\_T4\_v3 Azure cloud instance, equipped with a single NVIDIA Tesla T4 GPU with 16 GB of VRAM, 8 vCPUs, based on AMD EPYC 7V12 (Rome) processors, and 56 GiB of RAM.

\subsection{Result}

In the following, we report the experiment results regarding the answers to each research question (RQ). In brief, we present \cref{tab:verify}, \cref{tab:acc}, and \cref{fig:iter}. \cref{tab:verify} is mainly used to answer RQ1. \cref{tab:acc} contains the result of empirical individual fairness and accuracy, supporting RQ2 and RQ3. \cref{fig:iter} illustrates a dynamic statistics change along training and mainly corresponds to RQ4. 

\paragraph{Result for RQ1: Would our model initialisations be provably fair?}

\begin{table}[t]
\small
      \centering
      \caption{The individual fairness verification results for neural networks by Fairfy~\cite{biswas2023fairify}.}
    \begin{tabular}{@{}lcrrr@{}}
    \toprule
    \textbf{Dataset} & \textbf{Verification} & \textbf{\#Partition} & \textbf{Time (minutes)} & \textbf{Accuracy (\%)} \\ \midrule
    Census & Provably fair & 16,000    & 38.1  & 75.4 \\
    Bank   & Provably fair & 510    & 4.2 & 71.0    \\
    Credit & Provably fair & 201 & 2.3  & 75.55 \\
    Compas & Provably fair & 14,000 & 12.0  & 55.6  \\
    Health & Provably fair & 6,400 & 14.9 & 68.0    \\
    Law    & Provably fair & 13,394 & 14.5 & 73.7  \\ \bottomrule
    \end{tabular}
      \label{tab:verify}
\end{table}

\cref{lemma:init} establishes the existence of provable individually fair classifier initialisations, and this research question asks whether we can actually attain them empirically. To this purpose, we feed each initialised model to Fairfy~\cite{biswas2023fairify}, an existing sound fairness verification program. The verification results presented in \cref{tab:verify} indicate that across all datasets the initialised models consistently achieve provable individual fairness. Note that a successful verification result indicates that these initial states of classifiers satisfy individual fairness constraints for the entire input space.

While the above result well answers RQ1, we believe that additional details could provide insights. For instance, how easily can a classifier (whether initialised or already trained) be verified by Fairfy? The work of Fairfy~\cite{biswas2023fairify} highlights an intriguing phenomenon. Out of 25 frequently used classifiers in fairness testing~\cite{zhang2020white,zheng2022neuronfair}, none of them have been verified (fair) by Fairfy. By contrast, our six initialised classifiers are verified. This might suggest that an already trained classifier tends to be more difficult to verify, but verifying initialised models (\emph{e.g.}, shown in \cref{lemma:init}) could likely be a smooth task. 

We then look into the verification statistics in detail. For instance, in the Census task, a model initialisation achieves a test accuracy of 75.5\%. When fed into Fairfy~\cite{biswas2023fairify}, the entire input space is divided into 16,000 partitions, and it takes 38.1 minutes to complete and obtain a proof. For a reference classifier AC7~\cite{zhang2020white,biswas2023fairify} (with accuracy 84.85\%), it takes 178.1 minutes to verify only 21 of 16,000 partitions, among which 15 are falsified and 6 are undecided. A similar trend is observed in other datasets. This suggests that the model initialisations allow fast and effective verification, despite lower accuracy. Overall, the results demonstrate that the initialised models successfully achieve provable individual fairness across all datasets. This outcome is critical to support the individual fairness of our trained classifiers.

\paragraph{Result for RQ2: Empirically, would our trained classifiers exhibit individual fairness?}

\begin{table}[t]

\small
\centering
\caption{Empirical individual fairness and accuracy across datasets and methods. For each dataset, we present the results regarding the protected attributes.
\label{tab:acc}
}
\begin{tabular}{@{}l|crlrl|ccccc@{}}
\toprule
\textbf{}                       & \multicolumn{5}{c|}{\textbf{Empirical individual fairness} (\%)}                  & \multicolumn{5}{c}{\textbf{Accuracy} (\%)}         \\ \midrule
\textbf{Dataset (attr.)} & \textbf{ERM} & \textbf{LCIFR} & \textbf{MILP} & \textbf{U-DIF} & \textbf{Ours} & \textbf{ERM} & \textbf{LCIFR} & \textbf{MILP} & \textbf{U-DIF} & \textbf{Ours} \\\midrule
Census (age)    & 96.80 & 100.00 & 92.83 & 100.00 & 100.00 & 83.30 & 83.10 & 76.20 & 82.12 & 83.23 \\
Census (race)   & 96.64 & 100.00 & 99.22 & 100.00 & 100.00 & 83.30 & 83.22 & 78.18 & 83.15 & 83.24 \\
Census (gender) & 96.36 & 100.00 & 94.68 & 99.72  & 100.00 & 83.30 & 83.10 & 83.10 & 83.17 & 83.23 \\
Bank   (age)    & 84.19 & 99.94  & 96.93 & 100.00 & 100.00 & 89.00 & 88.50 & 88.04 & 87.93  & 88.51 \\
Credit (age)    & 88.50 & 100.00 & 83.50 & 100.00 & 100.00 & 75.50 & 74.50 & 75.00 & 70.50 & 75.50 \\
Credit (gender) & 87.00 & 100.00 & 90.00 & 100.00 & 100.00 & 75.50 & 75.00 & 75.00 & 70.49 & 75.00 \\
Compas (race)   & 90.11 & 100.00 & 95.71 & 100.00 & 100.00 & 72.53 & 66.37 & 55.95 & 55.37 & 67.75 \\
Health (gender) & 64.10 & 99.10  & 96.61 & 100.00 & 100.00 & 80.70 & 80.90 & 70.07 & 68.93 & 80.70 \\
Law    (gender) & 25.60 & 51.10  & 98.62 & 90.36 & 100.00 & 84.41 & 84.40 & 75.74 & 83.96 & 84.40\\ \bottomrule
\end{tabular}
\end{table}

\cref{prop:sound} establishes that if our initialisation is individually fair, then our trained model (through \cref{alg:gd}) is also individually fair. Furthermore, sound verification (as shown in RQ1) entails the individual fairness of our initialisations. Therefore, our trained models are already proven individually fair.

While the individual fairness of our trained models is formally proven, one may still be interested in the actual behaviour of these trained models. To this end, we evaluate the empirical individual fairness, \emph{i.e.}, for each input in the test dataset, \emph{rather than the entire input space}, we check if changing the sensitive feature changes the classifier outcome. Our empirical test allows comparison with other training methods, regardless of whether a formal fairness guarantee is global or local.

The empirical test result regarding individual fairness is shown in \cref{tab:acc}, under the ``Empirical individual fairness~(\%)'' tab. Results show that our trained models are fair regarding all inputs across all datasets, which is not surprising because these models have formally proved to satisfy global individual fairness. 

This empirical performance also surpasses all four baselines. Among the baselines, ERM almost always achieves the lowest empirical fairness, with an average of 80.42\% across all datasets. The largest fairness gap is observed in Law School (25.60\%), where more than 74\% of instances violate individual fairness, while the highest fairness under ERM is in the Census (96.80\%) result.

LCIFR indeed improves fairness, reaching an average of 94.56\% across datasets, with a 17.6\% relative increase over ERM. Besides, in three datasets (Census, Credit, and Compas), LCIFR has 100\% empirical fairness. Note that LCIFR guarantees only local individual fairness rather than global individual fairness. Specifically, LCIFR retains discriminatory samples in Health (99.10\%), Bank (99.94\%), and Law School (51.10\%), meaning it explicitly fails to provide empirical fairness in one-third of the datasets. U-DIF achieves an average of 25.6\% higher empirical fairness than LCIFR on Bank, Health, and Law, but falls short on Census (gender). MILP achieves lower empirical fairness than LCIFR or U-DIF, by 0.3\% or 4.8\%, respectively, which is likely because it is not directly optimised for individual fairness (as in \cref{def:if}).

Since individual fairness is a matter for every single person, we also study the absolute counts of discriminatory samples. \textcolor{black}{While LCIFR achieves 99.10\% fairness in Health and U-DIF achieves 99.72\% in Census (gender), this still leads to 393 (Health) or 18 (Census) individuals being treated unfairly, a non-negligible number.} This reveals that even almost perfect fairness percentages can still leave individuals subject to unfair treatment. \textcolor{black}{Additionally, in Law, fairness improves from 25.60\% (ERM) to 51.10\% (LCIFR) or 90.36\% (U-DIF), meaning LCIFR is twice as fair as ERM, and U-DIF is 3.5 times as fair as ERM. However, this still implies that 2,191 individuals (LCIFR) or 423 individuals (U-DIF) remain treated unfairly.}  In summary, our method completely eliminates all discriminatory samples and empirically outperforms all baselines.

Note that while this empirical fairness test does not further add to the strictness of our formal proof, it does highlight the consistency and effectiveness of our method in reducing the number of discriminatory samples across all datasets.

\paragraph{Result for RQ3: How good is the utility of our trained classifier?}

\cref{tab:acc} (under the ``Accuracy (\%)'' tab) presents a comparative analysis of the vanilla accuracy across various datasets and methods (ERM, LCIFR, MILP, U-DIF, and ours), highlighting their utility in practice. ERM reports an average accuracy of 83.30\% for the Census dataset, which remains consistent across all protected attributes (age, race, and gender). LCIFR drops slightly to 83.15\%$\pm$0.06, depending on the selected sensitive attribute. The proposed method offers a modest increase from LCIFR to 83.23\%. These small differences suggest that these three methods (ERM, LCIFR, and ours) perform similarly in minimising errors for this dataset. 

In the bank dataset, the vanilla accuracy is higher, with ERM reporting 89.00\%, LCIFR slightly dropping to 88.50\%, and ours being in between. The variation across methods is minimal, with differences of less than 5\%, indicating that the proposed method maintains a stable performance in this dataset despite addressing discriminatory outcomes. For smaller datasets like Credit and Compas, incorrect classifications become more common across all methods, with almost a quarter of all testing samples. Yet, the accuracy gap among these training methods is small ($<$0.5\%) for Credit and relatively large ($\pm$3\%) for Compas. This could likely be attributed to the small dataset size~\cite{dietterich1995overfitting}.

Besides, our method achieves, on average, a 4.2\% higher utility compared to the MILP utility or the U-DIF utility. U-DIF achieves similar utility (to ERM, LCIFR, and ours) on Census, Bank, and Law, although still 0.5\% lower in average. Yet, its utility on Credit, Compas, and Health is around 12\% lower on average. MILP achieves similar utility (to ERM, LCIFR, and ours) on Bank and Credit, but lower on other datasets.

Overall, the average accuracy across all datasets and methods reveals a consistent trend. ERM reports an average accuracy of 80.84\%, which slightly decreased by about 1.2\% with LCIFR and is marginally reduced by about 0.83\% with the proposed method. Our approach consistently performs closer to ERM, offering a modest improvement over LCIFR. From the perspective of protected attributes, the variation in accuracy under the compared methods is consistently small, typically ranging between 0.2\% $\sim$1\% across all datasets and attributes.

\begin{figure}[t]
    \centering
    \begin{subfigure}[t]{0.33\linewidth}
    \centering
    \includegraphics[width=\linewidth]{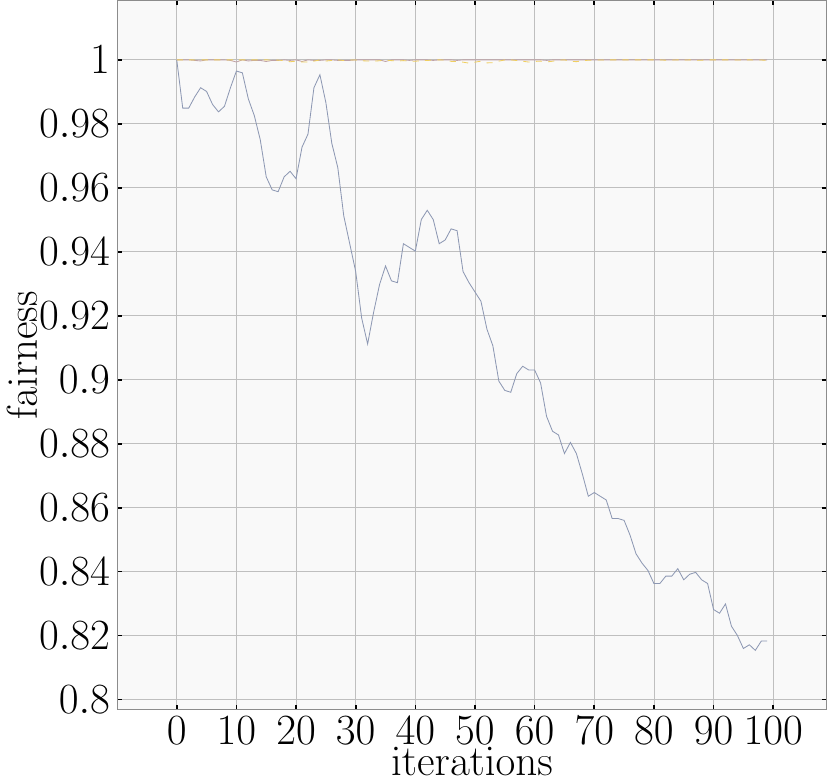}
    \caption{Individual fairness\label{fig:if}}
    \end{subfigure}%
    % \hfill
    \begin{subfigure}[t]{0.33\linewidth}
    \centering
    \includegraphics[width=\linewidth]{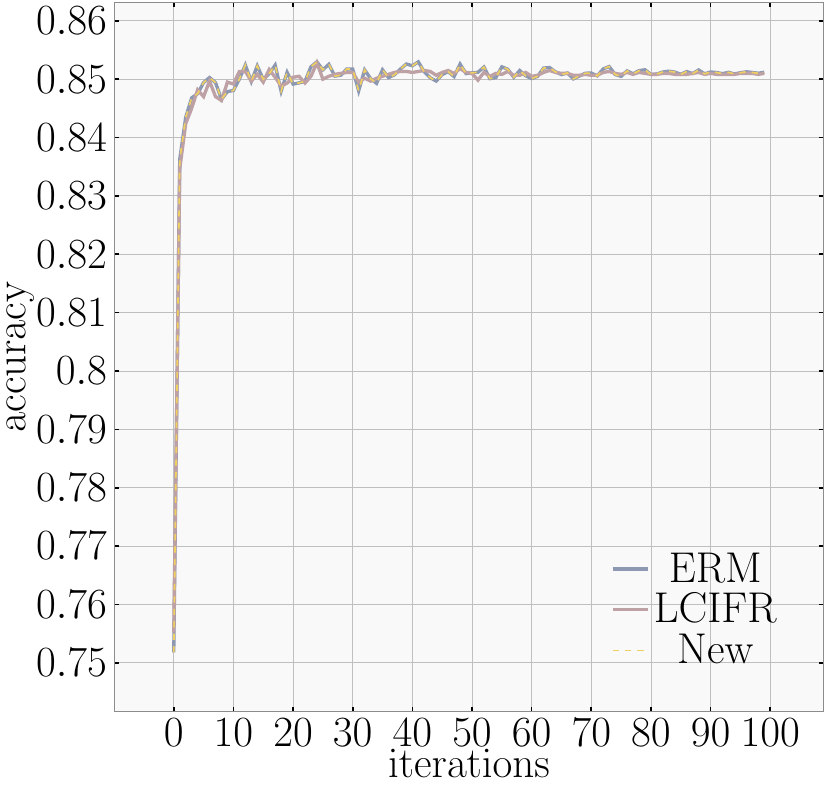}
    \caption{Accuracy\label{fig:acc}}
    \end{subfigure}%
    % \hfill
    \begin{subfigure}[t]{0.33\linewidth}
    \centering
    \includegraphics[width=\linewidth]{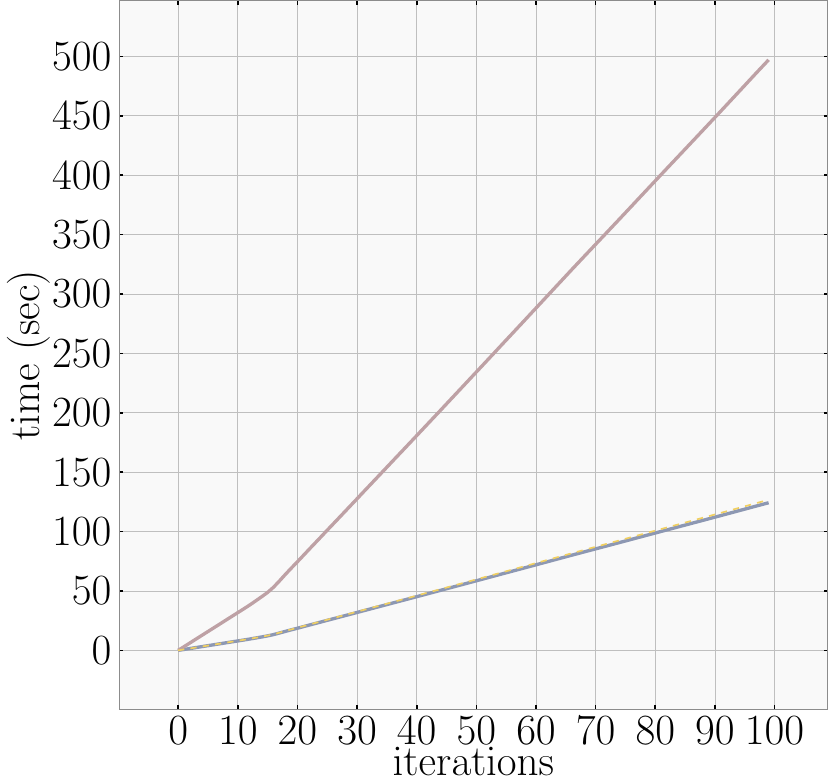}
    \caption{Running time\label{fig:efficiency}}
    \end{subfigure}%
    % \hfill
    \caption{As iteration increases, we plot the empirical individual fairness, accuracy, and running time of each classifier in the Census dataset~\cite{adult_2}.
    }
    \label{fig:iter}
\end{figure}

To understand the utility difference between the proposed method and existing methods like LCIFR, we analyse our misclassified examples, especially the cases where other methods make correct predictions. For instance, in Census, there are 98 such cases out of 6,512 testing examples. It is observed that these 98 cases all have positive class labels, indicating that the analysed cases tend to be false negatives. This may be a result of label distribution or applying fairness constraints, leading the model to be more careful when assigning positive labels.

In summary, the proposed method reduces the accuracy slightly compared to ERM. However, it generally aligns closer to ERM, offering better performance than LCIFR in minimising errors while significantly reducing discriminatory samples. These results suggest that the trade-off between enhancing individual fairness and maintaining accuracy can be small, \emph{i.e.}, protecting sensitive attributes does not ruin the utility.

\paragraph{Result for RQ4: Does the training process maintain efficiency while ensuring individual fairness?}

To answer the fourth research question, \emph{i.e.}, whether pursuing individual fairness reduces the computational efficiency of training a classifier, we need to compare the results shown in \cref{fig:iter}. Specifically, \cref{fig:efficiency} most directly compares the time of various training. While ERM is still considered the simplest and most efficient among the three, the proposed training only has a modest time growth on top of that. For 100 epochs on the Census dataset, the proposed training takes around 122 seconds to complete, which is only 2.04\% more than that taken by ERM. By contrast, existing training like LCIFR takes around 497 seconds, which is more than 4.15 times the time taken by ERM. In summary, the new method is about four times faster than the existing LCIFR method. This is likely because the proposed training is end-to-end training that does not need to train a neural model/classifier part by part. As observed in \cref{fig:efficiency}, the training time per iteration is almost a constant for each training, \emph{i.e.}, the early epochs would not take a considerably shorter period compared with the later iterations. Thus, we can approximately compute the training speed as the number of iterations divided by the total time. In detail, the training speeds for ERM, LCIFR, and ours on Census are 0.83 steps/sec, 0.82 steps/sec, and 0.20 steps/sec, and the ratio is 1:0.99:0.24. Applying the same analysis, we found that the speed ratio is  1:0.99:0.24, with $\pm$0.001 fluctuation range, for all other datasets.

Besides comparing the absolute training directly as seen in \cref{fig:efficiency}, computation efficiency may also be observed according to the convergence speed. To the accuracy end as demonstrated in \cref{fig:acc}, all three training can achieve 95\% of their respective converged accuracy, \emph{i.e.}, the accuracy fluctuation is within $\pm$5\% of the converged accuracy. As for (empirical) individual fairness performance illustrated in \cref{fig:if}, LCIFR and the proposed training both maintain high empirical fairness from the start, and both are constantly higher than the individual fairness of ERM. Thus, the computational efficiency (of the proposed method) is not harmed from this perspective. 

Therefore, we can answer this research question that the key efficiency difference still lies in the training time per iteration, where the proposed method is around four times faster than the existing method with individual fairness guarantees.

\paragraph{Ablation: How does model configuration influence performance?}
\begin{figure}
    \centering
    \includegraphics[width=0.7\linewidth]{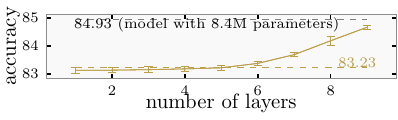}
    \caption{Impact of network configurations on utility. The dataset here is Census~\cite{adult_2}.\label{fig:config}}
    \label{fig:enter-label}
\end{figure}
To explore the applicability of the proposed method to other neural network configurations, we extend our experiments to shallower or deeper neural networks in literature~\cite{ruoss2020learning,zheng2022neuronfair}. Specifically, we evaluate networks ranging from a single hidden layer with 16 neurons to a deeper nine-layer network with the configuration [256, 256, 64, 64, 32, 32, 16, 8, 4]. Across these configurations, we observe that empirical fairness is consistently maintained. The corresponding utility results are presented in \cref{fig:config}. As shown, the shallower architecture results in a slight decrease in accuracy (by 0.09\%), while the deeper network achieves an absolute accuracy improvement of up to 1.2\%. Additionally, we evaluate a large neural network with approximately 8.4 million parameters, using the configuration [2048, 2048, 2048, 4]~\cite{benussi2022individual}, which achieves an accuracy of 84.93\%. This aligns with the intuition that appropriately scaled larger models could improve test accuracy. More importantly, it shows that the proposed method can scale well.

\section{Related Works}
\label{sec:related}

\textcolor{black}{The problem of ensuring satisfactory individual fairness in machine learning has drawn a lot of attention~\cite{bender2021dangers}.} This problem is typically addressed through two main perspectives, \emph{i.e.}, fairness optimisation and fairness verification. Each category addresses a part of this problem. Verification-based methods aim to provide formal guarantees by detecting fairness violations in a neural network across its input space~\cite{biswas2023fairify,kim2025fairquant,bastani2019probabilistic,li2023certifying}. Approaches such as FairQuant~\cite{kim2025fairquant} leverage symbolic analysis of deep neural networks, combined with iterative refinement, to identify unfair outcomes. These methods provide frameworks for identifying unfair outcomes but still leave an unavoidable portion undecided. Besides, the verification methods cannot provide fairness improvement.

Fairness-aware training methods address discrimination by incorporating fairness objectives into the learning process. Techniques such as adversarial sampling~\cite{zhang2021automatic,xu2021robust} and fairness-optimised representations~\cite{lahoti2019operationalizing,sarridis2024flac} adjust model training to mitigate bias. \textcolor{black}{For example, prior work~\cite{zhou-etal-2023-causal} aims to achieve fairness by modifying loss functions or introducing regularisation terms to balance fairness and utility. Alternatively, there are approaches to mitigating unfairness through post-processing neural network repair work like DICE~\cite{monjezi2023information} and NeuFair~\cite{dasu2024neufair}.} However, they often lack formal guarantees of fairness and may reduce model performance~\cite{wang2021understanding}.

Recently, methods focusing on certifiable fairness have gained attention, aiming to bridge the gap between theoretical guarantees and practical applicability. Works like certifiable individually fair representations (LCIFR)~\cite{peychev2022latent} map similar individuals to close latent representations and leverage the certified robustness method to achieve certified fairness. Yet, such work focuses on certifying a local individual fairness property, which is different from the global perspective in this study.  Similarly, \citet{wicker2023certification} apply a convex approximation of fairness constraints to guarantee individual fairness for distributions in a specific Wasserstein ball (U-DIF). This provides a distributional guarantee, covering a larger space than the local guarantee, but still missing a noticeable portion of the global space. \citet{benussi2022individual} study individual fairness guarantee in a global perspective (MILP). Yet, MILP focuses on a maximum possible outcome change, rather than directly answering whether fairness is satisfied. Thus, these guarantees are fundamentally different from ours.

On the other hand, certified group fairness~\cite{jin2022input,kang2022certifying} studies whether the balance is achieved among demographic groups. This is a different type of problem from the one in this work. To the best knowledge, our method is the first to achieve global individual fairness with formal guarantees.

\section{Conclusion}
\label{sec:conclusion}

This study tackles the problem of certifying the individual fairness of a well-performing neural classifier. Individual fairness has recently been acknowledged as a highly desirable property of machine learning algorithms in social and ethics-related classification tasks like income decisions. While individual fairness may be empirically improved, it is hard to guarantee this property, especially when the utility is high. To address this problem, we propose sound fairness-preserving training based on randomised responses of sensitive features where no bias is injected in this process. This training is particularly helpful given that it is applied to a provably fair initialisation. We formally show the existence of fair initiation and establish that no bias can be injected when an appropriate prior is set in the randomised response of sensitive features. The empirical fairness testing on six benchmark tasks also aligns with our theoretical results. Moreover, the proposed method certifies individual fairness through training and avoids computationally expensive post hoc fairness verification, which increases efficiency. To summarise, the proposed method well addresses our problem, without compromising utility or efficiency.

\section*{Data-Availability Statement}
Our code, data, and results are available at \url{https://github.com/cat-claws/correct-by-construction}.

%%
%% The acknowledgments section is defined using the "acks" environment
%% (and NOT an unnumbered section). This ensures the proper
%% identification of the section in the article metadata, and the
%% consistent spelling of the heading.
\begin{acks}
This research is partially supported by the Lee Kuan Yew Fellowship Fund awarded to Professor SUN Jun.
\end{acks}

%%
%% The next two lines define the bibliography style to be used, and
%% the bibliography file.
\bibliographystyle{ACM-Reference-Format}

%%
%% If your work has an appendix, this is the place to put it.
\appendix

\end{document}